\newcommand{\cD}{\mathcal{D}}
\newcommand{\cK}{\mathcal{K}}
\newcommand{\cO}{\mathcal{O}}
\newcommand{\E}{\mathbb{E}}
\newcommand{\I}{\mathbb{I}}
\newcommand{\N}{\mathbb{N}}
\renewcommand{\P}{\mathbb{P}}
\newcommand{\R}{\mathbb{R}}
\DeclareMathOperator*{\argmax}{argmax}
\newcommand{\mutation}{value}
\newcommand{\duration}{duration}
    \newcommand{\durationrm}{\tau}
\newcommand{\decision}{decision}
\newcommand{\accept}{\mathrm{accept}}
\newcommand{\cost}{cost}
    \newcommand{\costrm}{\mathrm{cost}}
    \newcommand{\cosrm}{\mathrm{cost}}
    \newcommand{\cosbar}{\overline{c}}
\newcommand{\reward}{reward}
    \newcommand{\rewardrm}{\mathrm{reward}}
    \newcommand{\rewrm}{\mathrm{reward}}
    \newcommand{\rewbar}{\overline{r}}
\newcommand{\cape}{CAPE}
\newcommand{\esc}{ESC}
\newcommand{\capf}[1]{D_{#1}}
\newcommand{\bx}{\boldsymbol{x}}
\newcommand{\bX}{\boldsymbol{X}}
\newcommand{\e}{\varepsilon}
\newcommand{\lrb}[1]{\left(#1\right)}
\newcommand{\brb}[1]{\bigl(#1\bigr)}
\newcommand{\Brb}[1]{\Bigr(#1\Bigr)}
\newcommand{\lsb}[1]{\left[#1\right]}
\newcommand{\bsb}[1]{\bigl[#1\bigr]}
\newcommand{\Bsb}[1]{\Bigr[#1\Bigr]}
\newcommand{\lcb}[1]{\left\{#1\right\}}
\newcommand{\bcb}[1]{\bigl\{#1\bigr\}}
\newcommand{\lce}[1]{\left\lceil#1\right\rceil}
\newcommand{\bce}[1]{\bigl\lceil#1\bigr\rceil}
\newcommand{\lab}[1]{\left\lvert#1\right\rvert}
\newcommand{\bab}[1]{\bigl\lvert#1\bigr\rvert}
\newcommand{\bool}{\{0,1\}}
\newcommand{\spin}{\{-1,1\}}
\newcommand{\spino}{[-1,1]}
\newcommand{\spinon}{\spino^\N}
\newcommand{\s}{\subset}
\newcommand{\m}{\setminus}
\newcommand{\iop}{\infty}
\renewcommand{\l}{\ldots}
\newcommand{\dt}{\displaystyle}
\newcommand{\Pol}{\Pi}
\newcommand{\xbar}{\overline{x}}
\newcommand{\ks}{k^\star}
\newcommand{\kts}{k'}
\newcommand{\sbar}{\overline{c}}
\newcommand{\rp}{\widehat{r}^{\,+}}
\newcommand{\rmm}{\widehat{r}^{\,-}}
\newcommand{\rpm}{\widehat{r}^{\,\pm}}
\newcommand{\spp}{\widehat{c}^{\,+}}
\newcommand{\spm}{\widehat{c}^{\,\pm}}
\newcommand{\sm}{\widehat{c}^{\,-}}
\newcommand{\wt}{\widetilde}
\newcommand{\Nex}{N_{\mathrm{ex}}}
\newcommand{\papertitle}{ROI Maximization\\in Stochastic Online Decision-Making}
\newcommand{\suppl}{Appendix}
\newcommand{\roi}{\mathrm{ROI}}
\newcommand{\fracc}[2]{#1 / #2}
\newtheorem{theorem}{Theorem}
\newtheorem{lemma}[theorem]{Lemma}
\title{\papertitle}
\author{
  \textbf{Nicol\`o Cesa-Bianchi}\\
  \small{Universit\`a degli Studi di Milano \& DSRC}
  \and
  \textbf{Tommaso Cesari}\\
  \small{ Toulouse School of Economics (TSE) \& Artificial and Natural Intelligence Toulouse Institute (ANITI) }
  \and
  \textbf{Yishay Mansour}\\
  \small{ \hspace{5cm} Tel Aviv University \& Google research \hspace{5cm} }
  \and
  \textbf{Vianney Perchet}\\
  \small{ CREST, ENSAE \& Criteo AI Lab, Paris }
}
\begin{document}

\maketitle

\begin{abstract}%
We introduce a novel theoretical framework for Return On Investment (ROI) maximization in repeated decision-making. Our setting is motivated by the use case of companies that regularly receive proposals for technological innovations and want to quickly decide whether they are worth implementing. We design an algorithm for learning ROI-maximizing decision-making policies over a sequence of innovation proposals. Our algorithm provably converges to an optimal policy in class $\Pi$ at a rate of order $\min\big\{1/(N\Delta^2),N^{-1/3}\}$, where $N$ is the number of innovations and $\Delta$ is the suboptimality gap in $\Pi$. A significant hurdle of our formulation, which sets it aside from other online learning problems such as bandits, is that running a policy does not provide an unbiased estimate of its performance.
\end{abstract}

\section{Introduction}
\label{s:intro}
Often, companies have to make yes/no decisions, such as whether to adopt a new technology or retire an old product. However, finding out the best option in all circumstances could mean spending too much time or money in the evaluation process.
If the decisions to make are many, one could be better off making more of them quickly and inexpensively, provided that these decisions have an overall positive effect.
In this paper, we investigate the problem of determining a decision policy to balance the reward over cost ratio optimally (i.e., to maximize the return on investment).

\paragraph{A motivating example.}
Consider a technology company that keeps testing innovations to increase some chosen metric (e.g., benefits, gross revenue, revenue excluding the traffic acquisition cost). Before deploying an innovation, the company wants to figure out whether it is profitable. As long as each innovation can be tested on i.i.d.\ samples of users, the company can perform randomized tests and make statistically sound decisions. However, there is an incentive to make these tests run as quickly as possible because, for example, the testing process is expensive. Another reason could be that keeping a team on a project that has negative, neutral, or even borderline positive potential prevents it from testing other ideas that might lead to a significantly better improvement.
In other words, it is crucial to learn when to drop barely positive innovations in favor of highly positive ones, so to increase the overall flow of improvement over time (i.e., the ROI of the tests).

More generally, our framework describes problems where an agent faces a sequence of decision tasks consisting of either accepting or rejecting an innovation. Before making each decision, the agent can invest resources into reducing the uncertainty on the value brought by the innovation. 
The global objective is to maximize the total ROI. Namely, the ratio between the total value accumulated by accepting innovations and the total cost. For an in-depth discussion on alternative goals, we refer the reader to the \suppl{} (Section~\ref{s:model-choice}).

\paragraph{The model.}
Each task $n$ in the sequence is associated with a pair $(\mu_n, \mathcal{D}_n)$ that the learner can \textsl{never} directly observe.
\begin{itemize}[nosep]
\item $\mu_n$ is a random variable representing the (possibly negative) true value of the $n$-th innovation.
\item $\cD_n$ is a probability distribution over the real numbers with expectation $\mu_n$, modeling the feedback on the $n$-th innovation that the learner can gather from testing (see below).
\end{itemize}
During the $n$-th task, the learner can draw arbitrarily many i.i.d.\ samples $X_{n,1},X_{n,2},\ldots$ from $\cD_n$, accumulating information on the unknown value $\mu_n$ of the innovation currently being tested.
After stopping drawing samples, the learner can decide to either accept the innovation, earning $\mu_n$ as a reward, or reject it and gain nothing instead.
We measure the agent performance during $N$ tasks as the (expected) total amount of value accumulated by accepting innovations $\mu_n$ divided by the (expected) total number of samples requested throughout all tasks.
In Section~\ref{s:setting} we present this setting in more detail and introduce the relevant notation.

\paragraph{I.I.D.\ assumption.}
We assume that the value $\mu_n$ of the $n$-th innovation is drawn i.i.d.\ from an unknown and fixed distribution.
This assumption is meaningful if past decisions do not influence future innovations whose global quality remains stable over time.
In particular, it applies whenever innovations can progress in many orthogonal directions, each yielding a similar added value (e.g., when different teams within the same company test improvements relative to individual aspects of the company).
If both the state of the agent and that of the environment evolve, but the ratio of good versus bad innovations remains essentially the same, then this i.i.d.\ assumption is still justified.
In other words, it is not necessarily the absolute quality of the innovations that has to remain stationary, but rather the relative added value of the innovations given the current state of the system.
This case is frequent in practice, especially when a system is close to its technological limit.
Last but not least, algorithms designed under stochastic assumptions often perform surprisingly well in practice, even if i.i.d.\ assumptions are not fully satisfied or simply hard to check.

\paragraph{A baseline strategy and policy classes.}
A natural, yet suboptimal, approach for deciding if an innovation is worth accepting is to gather samples sequentially, stopping as soon as the absolute value of their running average surpasses a threshold, and then accepting the innovation if and only if the average is positive. 
The major drawback of this approach is that the value $\mu_n$ of an innovation $n$ could be arbitrarily close to zero. 
In this case, the number of samples needed to reliably determine its sign (which is of order $1/\mu_n^2$) would become prohibitively large. 
This would result is a massive time investment for an innovation whose return is negligible at best. 
In hindsight, it would have been better to reject the innovation early and move on to the next task. 
For this reason, testing processes in practice needs hard termination rules of the form: \textsl{if after drawing a certain number of samples no confident decision can be taken, then terminate the testing process rejecting the current innovation}. 
Denote by $\durationrm$ this capped early stopping rule and by $\accept$ the accept/reject decision rule that comes with it. 
We say that the pair $\pi = (\durationrm, \accept)$ is a \textsl{policy} because it fully characterize the decision-making process for an innovation.
Policies defined by capped early stopping rules (see \eqref{e:pol-hoeff1} for a concrete example) are of great practical importance \citep{johari2017peeking,kohavi2013online}.
However, policies can be defined more generally by any reasonable stopping rule and decision function.
Given a (possibly infinite) set of policies, and assuming that $\mu_1,\mu_2,\ldots$ are drawn i.i.d.\ from some unknown but fixed distribution, the goal is to learn efficiently, at the lowest cost, the best policy $\pi_{\star}$ in the set with respect to a sensible metric.
Competing against fixed policy classes is a common modeling choice that allows to express the intrinsic constraints that are imposed by the nature of the decision-making problem.
For example, even if some policies outside of the class could theoretically yield better performance, they might not be implementable because of time, budget, fairness, or technology constraints.

\paragraph{Challenges.}
One of the biggest challenges arising in our framework is that running a decision-making policy generates a collection of samples that ---in general--- cannot be used to form an unbiased estimate of the policy reward (see the impossibility result 
in Section~\ref{s:imposs} of the \suppl).
The presence of this bias is a significant departure from settings like multiarmed and firing bandits~\citep{auer2002finite,jain2018firing}, where the learner observes an unbiased sample of the target quantity at the end of every round (see the next section for additional details).
Moreover, contrary to standard online learning problems, the relevant performance measure is neither additive in the number of innovations nor in the number of samples per innovation. 
Therefore, algorithms have to be analyzed globally, and bandit-like techniques ---in which the regret is additive over rounds--- cannot be directly applied. 
We argue that these technical difficulties are a worthy price to pay in order to define a plausible setting, applicable to real-life scenarios.

\paragraph{Main contributions.}
The first contribution of this paper is providing a mathematical formalization of our ROI maximization setting for repeated decision making (\Cref{s:setting}).
We then design an algorithm called Capped Policy Elimination (Algorithm~\ref{algo:cape}, \cape) that applies to finite policy classes (\Cref{s:cape}). 
We prove that \cape{} converges to the optimal policy at rate $1/(\Delta^2 N)$, where $N$ is the number of tasks and $\Delta$ is the unknown gap between the performance of the two best policies, and at rate $N^{-1/3}$ when $\Delta$ is small (\Cref{t:early-stop}) .
In Section~\ref{s:infinitePol} we tackle the challenging problem of infinitely large policy classes.
For this setting, we design a preprocessing step (Algorithm~\ref{algo:reduction}, \esc) that leads to the \esc{}-\cape{} algorithm. 
We prove that this algorithm converges to the optimal policy in an infinite set at a rate of $N^{-1/3}$ (Theorem~\ref{t:final}).

\paragraph{Limitations.} 
Although we do not investigate lower bounds in this paper, we conjecture that our $N^{-1/3}$ convergence rate it is optimal due to similarities with bandits with weakly observable feedback graphs (see \Cref{s:cape}, ``Divided we fall'').
Another limitation of our theory is that it only applies to i.i.d.\ sequences of values $\mu_n$. It would be interesting to extend our analysis to distributions of $\mu_n$ that evolve over time.
These two intriguing problems are left open for future research.

\section{Related Work}
\label{s:related}
Return on Investment (ROI) was developed and popularized by Donaldson Brown in the early Nineties \citep{roiGuy} and it is still considered an extremely valuable metric by the overwhelming majority of marketing managers \citep{farris2010marketing}. Beyond economics, mathematics, and computer science, ROI finds applications in other fields, such as cognitive science and psychology \citep{chabris2009allocation}. Despite this, to the best of our knowledge, no theoretical online learning framework has been developed specifically for ROI maximization.
However, our novel formalization of this sequential decision problem does share some similarities with other known online learning settings.
In this section, we review the relevant literature regarding these settings and stress the differences with ours.

\paragraph{Prophet inequalities and Pandora's box.} 
In prophet inequalities \citep{lucier2017economic,correa2019recent,alaei2012online}, an agent observes sequentially (usually non-negative) random variables $Z_1, \ldots, Z_n$ and decides to stop at some time $\tau$; the reward is then $Z_\tau$. Variants include the possibility of choosing more than one random variable (in which case the reward is some function of the selected random variables), and the possibility to go back in time (to some extent). 
The Pandora's box problem is slightly different \citep{weitzman1979optimal,kleinberg2016descending,esfandiari2019online}; in its original formulation, the agent can pay a cost $c_n\geq 0$ to observe any $Z_n$. After stopping exploring, the agent's final utility is the maximum of the observed $Z_n$'s minus the cumulative cost (or, in other variants, some function of these).
Similarly to the (general) prophet inequality, the agent in our sequential problem faces random variables ($Z_n=\mu_n$ in our notation) and sequentially selects any number of them (possibly with negative values) without the possibility to go back in time and change past decisions. 
The significant difference is that the agent in our setting never observes the value of $\mu_n$. In Pandora's box, the agent can see this value by paying some price (that approximately scales as $1/\varepsilon^2$ where $\varepsilon$ is the required precision). Finally, the global reward is the cumulative sum (as in prophets) and not the maximum (as in Pandora's box) of the selected variables, normalized by the total cost (as in Pandora's box, but our normalization is multiplicative instead of additive, as it represents a ROI).

\paragraph{Multi-armed bandits.}
If we think of the set of all policies used by the agent to determine whether or not to accept innovations as arms, our setting becomes somewhat reminiscent of multi-armed bandits \citep{Slivkins19book,bubeck2012regret,rosenberg2007social}.
However, there are several notable differences between these two problems.
In stochastic bandits, the agent observes an unbiased estimate of the expected reward of each pulled arm.
In our setting, the agent not only does not see it directly, but it is mathematically impossible to define such an estimator solely with the feedback received (see the impossibility result in Section~\ref{s:imposs} of the \suppl{}).
Hence, off-the-shelf bandit algorithms cannot be run to solve our problem.
In addition, the objective in bandits is to maximize the cumulative reward, which is additive over time, while the ROI is not.
Thus, it is unclear how formal guarantees for bandit algorithms would translate to our problem.

We could also see firing bandits \citep{jain2018firing} as a variant of our problem, where $\mu_n$ belongs to $[0,1]$,  $\mathcal{D}_n$ are Bernoulli distribution with parameter $\mu_n$, and policies
have a specific form that allows to easily define unbiased estimates of their rewards (which, we reiterate, is not possible in our setting in general).
Furthermore, in firing bandits, it is possible to go back and forth in time, sampling from any of the past distributions $\mathcal{D}_n$ and gathering any number of samples from it.
This is a reasonable assumption for the original motivations of firing bandits because the authors thought of $\mu_n$ as the value of a project in a crowdfunding platform, and, in their setting, drawing samples from $\mathcal{D}_n$ corresponds to displaying projects on web pages.
However, in our setting, $\mu_n$ represents the theoretical increment (or decrement) of a company's profit through a given innovation, and it is unlikely that a company would show new interest in investing in a technology that has been tested before and did not prove to be useful (a killed project is seldom re-launched).
Hence, when the sampling of  $\mathcal{D}_n$ stops, an irrevocable decision is made. After that, the learner cannot draw any more samples in the future.
Finally, as in multi-armed bandits, the performance criterion in firing bandits is the cumulative reward and not the global ROI.

Another online problem that shares some similarities with ours is bandits with knapsacks \citep{badanidiyuru2018bandits}. In this problem, playing an arm consumes one unit of time together with some other resources, and the learner receives an unbiased estimate of its reward as feedback. The process ends as soon as time or any one of the other resources is exhausted. As usual, the goal is to maximize the cumulative regret. As it turns out, we can also think of our problem as a budgeted problem. In this restatement, there is a budget of $T$ samples. The repeated decision-making process proceeds as before, but it stops as soon as the learner has drawn a total of $T$ samples across all decision tasks. The goal is again to maximize the total expected reward of accepted innovations divided by $T$ (see Section~\ref{s:model-choice} of the \suppl{} for more details on the reduction). As per the other bandit problems, there are two crucial differences. First, running a policy does not reveal an unbiased estimate of its reward. Second, our objective is different, and regret bounds do not directly imply convergence to optimal ROI.

\paragraph{Repeated A/B testing.}
We can view our problem as a framework for repeated A/B testing \citep{tukey1953problem,genovese2006false,foster2008alpha,heesen2016dynamic,javanmard2018online,azevedo2018b,li2019multiple,schmit2019optimal}, in which assessing the value of an innovation corresponds to performing an A/B test, and the goal is maximizing the ROI. 
A popular metric to optimize sequential A/B tests is the so-called \textsl{false discovery rate} (FDR) ---see \citep{ramdas2017online,yang2017framework} and references therein. Roughly speaking, the FDR is the ratio of accepted $\mu_n$ that are negative over the total number of accepted $\mu_n$ (or more generally, the number of incorrectly accepted tests over the total number if the metric used at each test changes with time). 
This, unfortunately, disregards the relative values of tests $\mu_n$ that must be taken into account when optimizing a single metric \citep{chen2019contextual,robertson2018online}. Indeed, the effect of many even slightly negative accepted tests could be overcome by a few largely positive ones. 
For instance, assume that the samples $X_{n,i}$ of any distribution $\mathcal{D}_n$ belong to $\spin$, and that their expected value $\mu_n$ is uniformly distributed on $\{-\e,\e\}$.
To control the FDR, each A/B test should be run for approximately $1/\e^2$ times, yielding a ratio of the average value of an accepted test to the number of samples of order  $\varepsilon^3$. A better strategy, using just one sample from each A/B test, is simply to accept $\mu_n$ if and only if the first sample is positive. Direct computations show that this policy, which fits our setting, achieves a significantly better performance of order $\e$.

Some other A/B testing settings are more closely related to ours, but make stronger additional assumptions or suppose preliminary knowledge: for example,  smoothness assumptions can be made on both $\mathcal{D}_n$ and the distributions of $\mu_n$ \citep{azevedo2018b}, or the distribution of $\mu_n$ is known, and the distribution of samples belongs to a single parameter exponential family, also known beforehand \citep{schmit2019optimal}.

\paragraph{Rational metareasoning.}
Our setting is loosely related to the AI field of meta-reasoning \citep{griffiths2019doing,meta1}. In a metalevel decision problem, determining the utility (or reward) of a given action is computationally intractable. Instead, the learner can run a simulation, investing a computational cost to gather information about this hidden value. The high-level idea is then to learn \textsl{which} actions to simulate. After running some simulations, the learner picks an action to play, gains the corresponding (hidden) reward, and the state of the system changes. In rational meta-reasoning, the performance measure is the value of computation (VOC): the difference between the increment in expected utility gained by executing a simulation and the cost incurred by doing so. This setting is not directly comparable to ours for two reasons. First, the performance measure is different, and the additive nature of the difference that defines the VOC gives no guarantees on our multiplicative notion of ROI. Second, in this problem, one can pick which actions to simulate, while in our settings, innovations come independently of the learner, who has to evaluate them in that order.

\section{Setting and Notation}
\label{s:setting}
In this section, we formally introduce the repeated decision-making protocol for an agent whose goal is to maximize the total return on investment in a sequence of decision tasks.

The only two choices that an agent makes in a decision task are when to stop gathering information on the current innovation and whether or not to accept the innovation based on this information. 
In other words, the behavior of the agent during each task is fully characterized by the choice of a pair
$
    \pi 
= 
    (\durationrm, \accept)
$
that we call a \textsl{(decision-making) policy} (for the interested reader, Section~\ref{s:policyFormal} of the \suppl{} contains a short mathematical discussion on policies), where:
\begin{itemize}
    \item $\durationrm(\bx)$, called \textsl{\duration{}}, maps a sequence of observations $\bx = (x_1,x_2,\ldots)$ to an integer $d$ (the no.\ of observations after which the learner stops gathering info on the current innovation);
    \item $\accept(d,\bx)$, called \textsl{\decision{}}, maps the firs $d$ observations of a sequence $\bx = (x_1,x_2,\ldots)$ to a boolean value in $\{0,1\}$ (where $1$ represents accepting the current innovation).
\end{itemize}

An instance of our repeated decision-making problem is therefore determined by a set of admissible policies $\Pol = \{\pi_k\}_{k\in\cK}$ $=  \lcb{ (\durationrm_k, \, \accept) }_{k\in\cK}$ (with $\cK$ finite or countable) and a distribution $\mu$ on
$[-1,1]$, modelling the value of innovations.%
\footnote{%
We assume that the values of the innovations and the learner's observations belong to $[-1,1]$ and $\{-1,1\}$ respectively. 
We do this merely for the sake of readability (to avoid carrying over awkward constants or distributions $\cD_n$).
With a standard argument, both $[-1,1]$ and $\{-1,1\}$ can be extended to arbitrary codomains straightforwardly under a mild assumption of subgaussianity.
}
Naturally, the former is known beforehand but the latter is unknown and should be learned.

For a fixed choice of $\Pol$ and $\mu$, the protocol is formally described below.
In each decision task $n$:
\begin{enumerate}[topsep = 0pt, parsep = 0pt, itemsep = 0pt]
    \item \label{i:condIndSamples-1} the \textsl{\mutation{}} $\mu_n$ of the current innovation is drawn i.i.d.\ according to $\mu$; 
    \item \label{i:condIndSamples-2} $\bX_n$ is a sequence of i.i.d.\ (given $\mu_n$) \textsl{observations} with $X_{n,i}=\pm 1$ and $\E[X_{n,i}\mid\mu_n]=\mu_n$;
    \item \label{i:duratio} the agent picks   $k_n\in\cK$ or, equivalently, a policy $\pi_{k_n} = (\durationrm_{k_n}, \accept ) \in \Pol$;
    \item \label{i:samples} the agent draws the first $d_n = \durationrm_{k_n}(\bX_n)$ \textsl{samples}\footnote{Given $\mu_n$, the random variable $d_n$ is a stopping time w.r.t.\ the natural filtration associated to $\bX_n$.} of the sequence of observations $\bX_n$;
    \item \label{i:accept} on the basis of these sequential observations, the agent makes the decision $\accept \brb{ d_n, \bX_n  }$.
\end{enumerate}
Crucially, $\mu_n$ is \textsl{never} revealed to the learner.
We say that the agent \textsl{runs a policy} $\pi_{k} = (\durationrm_{k},\accept)$ (on a \mutation{} $\mu_n$) when steps \ref{i:samples}--\ref{i:accept} occur (with $k_n \gets k$). We also say that they accept (resp., rejects) $\mu_n$ if their decision at step \ref{i:accept} is equal to $1$ (resp., $0$). 
Moreover, we say that the \textsl{\reward} obtained and the  \textsl{\cost{}} payed by running a policy $\pi_k = (\durationrm_k, \accept)$ on a \mutation{} $\mu_n$ are, respectively,
\begin{equation}
    \label{e:reward}
    \rewardrm(\pi_k, \mu_n)
 =
    \mu_n \,  \accept \brb{ \durationrm_k(\bX_n), \bX_n  }
    \in \{\mu_n,0\}
\hspace{6ex}
    \costrm(\pi_k, \mu_n)
 =
    \durationrm_k(\bX_n)
    \in \N
\end{equation}
The objective of the agent is to converge to the highest ROI of a policy in $\Pol$, i.e., to guarantee that
\begin{equation}
\label{e:regret}
    R_N
=
    \sup_{k \in \cK} \frac{\sum_{n=1}^N \E \bsb{ \rewardrm(\pi_{k},\mu_n) } }
        {\sum_{m=1}^N \E \bsb{ \costrm(\pi_{k},\mu_m) } }
    - \frac{\sum_{n=1}^N \E \bsb{ \rewardrm(\pi_{k_n},\mu_n) } }
        {\sum_{m=1}^N \E \bsb{ \costrm(\pi_{k_m},\mu_m) } }
    \to 0 \quad \text{as } N\to \iop
\end{equation}
where the expectations are taken with respect to $\mu_n$, $\bX_n$, and (possibly) the random choices of $k_n$.

To further lighten notations, we denote the expected \reward{}, \cost{}, and ROI of a policy $\pi$ by
\begin{equation}
    \label{e:rewCostShort}
    \rewrm(\pi) = \E \bsb{ \rewardrm(\pi, \mu_n) }
,\,
    \cosrm(\pi) = \E \bsb{ \costrm(\pi, \mu_n) }
,\,
    \roi(\pi) = { \rewrm(\pi) }/{ \cosrm(\pi) }
\end{equation}
respectively and we say that $\pi_{k^\star}$ is an \textsl{optimal policy} if 
$
	k^\star \in \argmax_{k\in \cK} \roi(\pi_k)
$.
Note that $\rewrm(\pi)$ and $\cosrm(\pi)$ do not depend on $n$ because $\mu_n$ is drawn i.i.d. according to $\mu$.

For each policy $(\durationrm,\accept)\in \Pol$ and all tasks $n$, we allow the agent to reject the \mutation{} $\mu_n$ regardless of the outcome of the sampling. Formally, the agent can always run the policy $(\durationrm, 0)$, where the second component of the pair is the decision identically equal to zero (i.e., the rule ``always reject''). 

We also allow the agent to draw arbitrarily many extra samples in addition to the number $\durationrm(\bX_n)$ that they would otherwise draw when running a policy $(\durationrm,\accept)\in \Pol$ on a value $\mu_n$, provided that these additional samples are not taken into account in the decision to either accept or reject $\mu_n$. 
Formally, the agent can always draw $\durationrm(\bX_n) + k$ many samples (for any $k\in \N$) before making the decision $\accept \brb{ \durationrm(\bX_n), \bX_n  }$, where we stress that the first argument of the decision function $\accept$ is $\durationrm(\bX_n)$ and not $\durationrm(\bX_n)+k$.
Oversampling this way worsens the objective and might seem utterly counterproductive, but it will be crucial for recovering unbiased estimates of $\mu_n$.

\section{Competing Against \texorpdfstring{$K$}{K} policies (\cape)} 
\label{s:cape}
As we mentioned in the introduction, in practice the \duration{} of a decision task is defined by a capped early-stopping rule ---e.g., drawing samples until $0$ falls outside of a confidence interval around the empirical average, or a maximum number of draws has been reached.
More precisely, if $N$ tasks have to be performed, one could consider the natural policy class $\lcb{ (\durationrm_k, \accept) }_{k \in \{1,\l,K\}}$ given by 
\begin{equation}
    \label{e:pol-hoeff1}
    \durationrm_k(\bx)
 =
    \min \brb{k,\ \inf \lcb{ d \in \N : \lab{ \xbar_d } \ge \alpha_d  } }
\qquad \text{and} \qquad 
    \accept(d, \bx)
 =
    \I \lcb{ \xbar_d \ge \alpha_d  }
\end{equation}
where $\xbar_d = (\nicefrac{1}{d})\sum_{i= 1}^d x_i$ is the average of the first $d$ elements of the sequence $\bx= (x_1, x_2, \l)$ and $\alpha_d = c \sqrt{(\nicefrac{1}{d})\ln(KN/\delta)}$, for some $c>0$ and $\delta\in(0,1)$.
While in this example policies are based on an Hoeffding concentration rule, in principle the learner is free to follow any scheme.
Thus, we now generalize this notion and present an algorithm with provable guarantees against these finite families of policies.

\paragraph{Finite sets of policies.}
In this section, we focus on finite sets of $K$ policies 
$
	\Pol
 =
	\{ \pi_k \}_{k\in\{1,\l,K\}}
 =
    \lcb{ (\durationrm_k, \accept) }_{k\in\{1,\l,K\}}
$
where $\accept$ is an arbitrary \decision{} and $\durationrm_1,\l,\durationrm_K$ is any sequence of bounded \duration{}s (say, $\durationrm_k \le k$ for all $k$).\footnote{We chose $\durationrm_k \le k$ for the sake of concreteness. All our results can be straightforwardly extended to arbitrary $\durationrm_k \le D_k$ by simply assuming without loss of generality that $k\mapsto D_k$ is monotone and replacing $k$ with $D_k$.}
For the sake of convenience, we assume the \duration{}s are sorted by index ($\durationrm_k \le \durationrm_h$ if $k\le h$), so that $\durationrm_1$ is the shortest and $\durationrm_K$ is the longest. 

\paragraph{Divided we fall.}
A common strategy in online learning problems with limited feedback is explore-then-commit (ETC).
ETC consists of two phases. In the first phase (explore), each action is played for the same amount of rounds, collecting this way i.i.d.\ samples of all rewards. In the subsequent commit phase, the arm with the best empirical observations is played consistently.
Being very easy to execute, this strategy is popular in practice, but unfortunately, it is theoretically suboptimal in some applications. 
A better approach is performing action elimination. In a typical implementation of this strategy, all actions in a set are played with a round-robin schedule, collecting i.i.d.\ samples of their rewards. At the end of each cycle, all actions that are deemed suboptimal are removed from the set, and a new cycle begins.
Neither one of these strategies can be applied directly because running a policy in our setting does not return an unbiased estimate of its reward (for a quick proof of this simple result, see Section~\ref{s:imposs} in the \suppl{}). However, it turns out that we can get an i.i.d. estimate of a policy $\pi$ by playing a \textsl{different} policy $\pi'$. Namely, one that draws \textsl{more} samples than $\pi$. This is reminiscent of bandits with a weakly observable feedback graph, a related problem for which the time-averaged regret over $T$ rounds vanishes at a $T^{-1/3}$ rate \citep{alon2015online}.
Albeit none of these three techniques works on its own, suitably interweaving all of them does.

\paragraph{United we stand.}
With this in mind, we now present our simple and efficient algorithm (Algorithm~\ref{algo:cape}, \cape) whose ROI converges (with high probability) to the best one in a finite family of policies.
We will later discuss how to extend the analysis even further, including countable families of policies.
Our algorithm performs policy elimination (lines~\ref{a:pe-begin}--\ref{s:polel}) for a certain number of tasks (line~\ref{a:pe-begin}) or until a single policy is left (line~\ref{s:testPolElim}). 
After that, it runs the best policy left in the set (line~\ref{s:exploit}) for all remaining tasks.
During each policy elimination step, the algorithm oversamples (line~\ref{a:oversampling}) by drawing twice as many samples as it would suffice to take its decision $\accept \brb{ \durationrm_{{\max(C_n)}}(\bX_n), \bX_n }$ (at line~\ref{s:decision}).
These extra samples are used to compute rough estimates of \reward{}s and \cost{}s of all potentially optimal policies and more specifically to build \textsl{unbiased} estimates of these \reward{}s.
The test at line~\ref{s:nexbig} has the only purpose of ensuring that the denominators $\sm_n(k)$ at line~\ref{s:polel} are bounded away from zero so that all quantities are well-defined.

{ 
\renewcommand{\capf}{}

\begin{algorithm2e}
    \LinesNumbered
    \SetAlgoNoLine
    \SetAlgoNoEnd
    \DontPrintSemicolon
	\SetKwInput{kwInit}{Initialization}
	\KwIn{finite policy set $\Pol$,
	number of tasks $N$, confidence parameter $\delta$, exploration cap $\Nex$}
	\kwInit{let $C_1 \gets \{1,\l,K\}$ be the set of indices of all currently optimal candidates}
\For 
    {%
    task $n = 1,\ldots,\Nex$%
    \nllabel{a:pe-begin}%
    }
    {
    draw the first $2  \capf{ \max(C_n) }$ samples $X_{n,1},\l,X_{n,2\capf{\max(C_n)}}$ of $\bX_n$ \nllabel{a:oversampling}\;
    make the \decision{} $\accept \brb{ \durationrm_{{\max(C_n)}}(\bX_n), \bX_n }$\nllabel{s:decision}\;
    \lIf
        {%
        \nllabel{s:nexbig}%
        $n \ge 2 \capf{K}^2 \ln(4K\Nex/ \delta)$%
        }
        {%
        let 
        $C_{n+1} \gets C_n \m C_n'$, where \vspace{1ex}
        
        $
            \quad
            C_n'
	    =
	        \left\{ k \in C_n \,:\,
            \brb{ 
            \rp_n(k) \ge 0 \text{ and }  \fracc{\rp_n(k)}{\sm_n(k)} < \fracc{\rmm_n(j)}{\spp_n(j)} 
            \text{, for some } j\in C_n
            }
            \right.
        $
        $
            \left. \hspace{22.58mm}
            \text{ or }
            \brb{
            \rp_n(k) < 0 \text{ and } 
            \fracc{\rp_n(k)}{\spp_n(k)} < \fracc{\rmm_n(j)}{\sm_n(j)} 
            \text{, for some } j\in C_n
            }
            \right\}
        $
        \vspace{-0.5ex}
	    \begin{align}
	        \label{e:rpmdef}
            \rpm_n(k)
        & =
    	    \frac{1}{n} \sum_{m=1}^{n}
	        \sum_{i=1}^{\capf{\max(C_m)}} \frac{X_{m,\capf{\max(C_m)}+i}}{\capf{\max(C_m)}}
	        \, 
    	    \accept \brb{ \durationrm_k (\bX_m), \bX_m}
    	    \pm \sqrt{\frac{2}{n}\ln \frac{4 K \Nex}{\delta}}
        \\
            \label{e:cpmdef}
        	\spm_n(k)
        & =
    	    \frac{1}{n} \sum_{m=1}^{n}
    	    \durationrm_k(\bX_m) \pm (\capf{k}-1) \sqrt{\frac{1}{2n}\ln\frac{4 K \Nex }{ \delta }}
        \end{align}
        \nllabel{s:polel}\vspace{-3ex}
        }
\lIf
    {
    \nllabel{s:testPolElim}
    $\lab{C_{n+1}} = 1$
    }
    {
    let $\rpm_{\Nex}(k)\gets\rpm_n(k)$, $\spm_{\Nex}(k)\gets\spm_n(k)$, $C_{\Nex+1} \gets C_{n+1}$,  \textbf{break}
    }
	}
run policy $\pi_{\kts}$ for all remaining tasks, where 
\begin{equation}
    \label{e:ktsdef}
    \kts \in 
\begin{cases}
    \dt{ \argmax_{k \in C_{\Nex+1}}  \lrb{ {\rp_{\Nex}(k)} / {\sm_{\Nex}(k)} } }
        & \text{ if } \rp_{\Nex}(k)\ge 0 \text{ for some } k \in C_{\Nex+1}
    \\
    \dt{ \argmax_{k \in C_{\Nex+1}}  \lrb{ {\rp_{\Nex}(k)} / {\spp_{\Nex}(k)} } }
        & \text{ if } \rp_{\Nex}(k) < 0 \text{ for all } k \in C_{\Nex+1}
\end{cases}
\end{equation}
\nllabel{s:exploit}
\caption{Capped Policy Elimination (CAPE)\label{algo:cape}}
\end{algorithm2e}

As usual in online learning, the \textsl{gap} in performance between optimal and sub-optimal policies is a complexity parameter. We define it as
$
	\Delta 
= 
	\min_{k \neq \ks}
	\brb{ \roi(\pi_{\ks}) - \roi(\pi_k) }
$,
where we recall that $\ks \in \argmax_k \roi(\pi_k)$ is the index of an optimal policy.
Conventionally, we set $1/0 = \iop$.
\begin{theorem}
\label{t:early-stop}
If $\Pol$ is a finite set of $K$ policies, then 
the ROI of Algorithm~\ref{algo:cape} run for $N$ tasks with exploration cap $\Nex = \bce{ N^{2/3} }$ and confidence parameter $\delta\in(0,1)$ converges to the optimal $\roi(\pi_{\ks})$, with probability at least $1-\delta$, at a rate
\begin{equation*}
    R_N
=
    \wt{\cO} \lrb
    {
    \min \lrb
    { 
    \frac{ K^3 }{\Delta^2 \,  N}
    , 
    \frac{ K }{ N^{1/3} } 
    } 
    }
\end{equation*}
as soon as  $N \ge K^3$ (where the $\wt{\cO}$ notation hides only logarithmic terms, including a $\log(1/\delta)$ term). 
\end{theorem}

\begin{proof}[Proof sketch.]
This theorem relies on four technical lemmas (Lemmas~\ref{lm:claim1}-\ref{lm:claim4}) whose proofs are deferred to Section~\ref{s:techlem} of the \suppl{}.

With a concentration argument (Lemma~\ref{lm:claim1}), we leverage the definitions of $\rpm_n(k), \spm_n(k)$ 
and the i.i.d.\ assumptions on the samples $X_{n,i}$ 
to show  that, with probability at least $1-\delta$, the event
\begin{equation}
    \label{e:claim1-0}
    \rmm_n(k) \le \rewrm(\pi_k) \le \rp_n(k)
\qquad\text{and}\qquad
    \sm_n(k) \le \cosrm(\pi_k) \le \spp_n(k)
\end{equation}
occurs simultaneously for all $n \le \Nex$ and all $k\le\max(C_n)$. 
For the rewards, the key is oversampling, because $\accept \brb{ \durationrm_k (\bX_m), \bX_m}$ in \cref{e:rpmdef} depends only on the first $k\le \max(C_m)$ samples of $\bX_m$ and is therefore independent of $X_{m,\capf{\max(C_m)}+i}$ for all $i$.
Assume now that \eqref{e:claim1-0} holds.

If $\Delta>0$ (i.e., if there is a unique optimal policy), we then obtain (Lemma~\ref{lm:claim2}) that suboptimal policies are eliminated after at most $\Nex'$ tasks, where
$
    \Nex'
\le 
    {288 \, \capf{K}^2 \ln(4K\Nex /\delta)}/{\Delta^2} + 1
$.
To prove it we show that a confidence interval for $\roi(\pi_k) = {\rewrm(\pi_k)}/{\cosrm(\pi_k)}$ is given by
\[
    \lsb{ 
    \frac{\rmm_n(k)}{\spp_n(k)}\I\bcb{\rp_n(k) \ge 0}
    + \frac{\rmm_n(k)}{\sm_n(k)}\I\bcb{\rp_n(k) < 0}
, \ 
    \frac{\rp_n(k)}{\sm_n(k)}\I\bcb{\rp_n(k) \ge 0} 
    + \frac{\rp_n(k)}{\spp_n(k)}\I\bcb{\rp_n(k) < 0}
    }
\]
we upper bound its length,
and  we compute an $\Nex'$ such that this upper bound is smaller than $\Delta/2$.

Afterwards, we analyze separately the case in which the test at line~\ref{s:testPolElim} is true for some task $\Nex' \le \Nex$ and its complement (i.e., when the test is always false).

In the first case, by \eqref{e:claim1-0} there exists a unique optimal policy, i.e., we have that $\Delta>0$. 
This is where the policy-elimination analysis comes into play.
We can apply the bound above on $\Nex'$, obtaining a deterministic upper bound $\Nex''$ on the number $\Nex'$ of tasks needed to identify the optimal policy. Using this upper bound, writing the definition of $R_N$, and further upper bounding (Lemma~\ref{lm:claim3}) yields
\begin{equation}
    \label{e:claim3-0}
    R_N
\le
    \min \lrb
    {\frac{(2 \capf{K}  + 1) \Nex }{N}
    ,\
    \frac
    {( 2\capf{K}  + 1 ) \brb{ 288 \, ( {\capf{K}}/{\Delta} )^2 \ln(4K\Nex /\delta) + 1 }}
    {N}
    }
\end{equation}
Finally, we consider the case in which the test at line~\ref{s:testPolElim} is false for all tasks $n \le \Nex$, and line~\ref{s:exploit} is executed with $C_{\Nex+1}$ containing two or more policies.
This is covered by a worst case explore-then-commit analysis.
The key idea here is to use the definition of $\kts$ in Equation \eqref{e:ktsdef} to lower-bound $\rewrm(\pi_{k'})$ in terms of $\rewrm(\pi_{\ks})/\cosrm(\pi_{\ks})$. 
This, together with some additional technical estimations (Lemma~\ref{lm:claim4}) leads to the result.
\end{proof}

\section{Competing Against Infinitely Many Policies (\esc-\cape)}
\label{s:infinitePol}
\Cref{t:early-stop} provides theoretical guarantees on the convergence rate $R_N$ of \cape{} to the best ROI of a finite set of policies. 
Unfortunately, the bound becomes vacuous when the cardinality $K$ of the policy set is large compared to the number of tasks $N$.
It is therefore natural to investigate whether the problem becomes impossible in this scenario.

\paragraph{Infinite sets of policies.}
With this goal in mind, we now focus on policy sets
$
	\Pol
 =
	\{ \pi_k \}_{k\in\cK}
 =
    \bcb{ (\durationrm_k, \accept) }_{k\in\cK}
$
as in the previous section, with $\cK = \N$ rather than $\cK = \{1,\ldots,K\}$.

We will show how such a countable set of policies can be reduced to a finite one containing all optimal policies with high probability (Algorithm~\ref{algo:reduction}, \esc).
After this is done, we can run \cape{} on the smaller policy set, obtaining theoretical guarantees for the resulting algorithm.

\paragraph{Estimating rewards and costs.}
Similarly to \cref{e:rpmdef,,e:cpmdef}, we first introduce estimators for our target quantities.
If at least $2\capf{k}$ samples are drawn during each of $n_2$ consecutive tasks $n_1 +1,$ $\ldots,$ $n_1+n_2$, we can define, for all $\e>0$, 
the following lower confidence bound on $\rewrm(\pi_k)$:
\begin{align}
        \label{e:ulcbdef}
    	\rmm_{k}(n_1,n_2,\e)
& =
	\frac{1}{n_2} \sum_{n=n_1+1}^{n_1+n_2}
	\sum_{i=1}^{\capf{k}}
	\frac{X_{n,\capf{k} + i}}{\capf{k}}
	\, 
	\accept \brb{ \durationrm_k (\bX_n), \bX_n}
	- 2 \e
\end{align}
If at least $\durationrm_k(\bX_n)$ samples are drawn during each of $m_0$ consecutive tasks $n_0+1, \l, n_0 + m_0$, we can define the following empirical average of $\costrm(\pi_k)$:
\begin{equation}
    \label{e:cbardef}
    	\sbar_k(n_0,m_0)
=
	\brb{ \durationrm_k(\bX_{n_0+1}) + \l + \durationrm_k(\bX_{n_0+m_0}) } / m_0
\end{equation}

\paragraph{A key observation.}
The key idea behind Algorithm~\ref{algo:reduction} (\esc) is simple. 
Since all optimal policies $\pi_{\ks}$ have to satisfy the relationships $\rewrm(\pi_k)/\cosrm(\pi_k) \le \rewrm(\pi_{\ks})/\cosrm(\pi_{\ks}) \le 1/\cosrm(\pi_{\ks})$, then, for all policies $\pi_k$ with positive $\rewrm(\pi_k)$, the \cost{} of any optimal policy $\pi_{\ks}$ must satisfy the relationship $\cosrm(\pi_{\ks}) \le \cosrm(\pi_k)/\rewrm(\pi_k)$.
In other words, \textsl{optimal policies cannot draw too many samples} and their \cost{} can be controlled by estimating the \reward{} and \cost{} of \textsl{any} policy with positive \reward{}.

We recall that running a policy $(\tau,0)$ during a task $n$ means drawing the first $\tau(\bX_n)$ samples of $\bX_n=(X_{n,1},X_{n,2},\ldots)$ and always rejecting $\mu_n$, regardless of the observations.

\begin{algorithm2e}
    \LinesNumbered
    \SetAlgoNoLine
    \SetAlgoNoEnd
    \DontPrintSemicolon
	\SetKwInput{kwInit}{Initialization}
	\KwIn{countable policy set $\Pol$, number of tasks $N$, confidence parameter $\delta$, accuracy levels $(\e_n)_n$
	
	}
	\kwInit{for all $j$, let $m_j \gets \lce{ {\ln \brb{ j(j+1)/\delta }} / {2\e_j^2} }$ and $M_j = m_1 + \l + m_j$}
\For
    {
    $j=1,2,\l$
    \nllabel{a:1-begin}
    }
    {
	run policy $\brb{ 2 \cdot \capf{2^j} , 0 }$ for $m_j$ tasks and compute $\rmm_{2^j} \gets \rmm_{2^j}(M_{j-1},m_j, \e_j)$\nllabel{s:rj}  as in  \eqref{e:ulcbdef}\;
	\textbf{if} \ \nllabel{st:test-p1} $\rmm_{2^j} > 0$ \ \textbf{then} \ let $j_0 \gets j$ and $k_0 \gets 2^{j_0}$\;\nllabel{st:found-pos}
    \Indp \For{\nllabel{st:for-loop}$l=j_0+1, j_0 +2 ,\l$}
		{
			run policy $\brb{ \durationrm_{2^l}, 0 }$ for $m_l$ tasks and compute $\sbar_{2^l} \gets \sbar_{2^l}(M_{l-1},m_l)$ as in \eqref{e:cbardef} \nllabel{s:samples2}\;
			\lIf{$\sbar_{2^l} > \capf{2^l} \,  \e_l + \capf{k_0} / \rmm_{k_0}$\nllabel{st:test-p2} }
			{
			let $j_1 \gets l$ and \textbf{return} $K \gets 2^{j_1}$ \nllabel{a:nontrivial-bound}
			\nllabel{a:1-end}
			}
		}
    } 
\caption{\label{algo:reduction} Extension to Countable (\esc)}
\end{algorithm2e}

Thus, Algorithm~\ref{algo:reduction} (\esc) first finds a policy $\pi_{k_0}$ with $\rewrm(\pi_{k_0})>0$ (lines~\ref{a:1-begin}--\ref{st:found-pos}), memorizing an upper estimate $\capf{k_0} / \rmm_{k_0}$ of the ratio $\cosrm(\pi_{k_0})/\rewrm(\pi_{k_0}) = 1/\roi(\pi_{k_0})$. 
By the argument above, this estimate upper bounds the expected number of samples $\cosrm(\pi_{\ks})$ drawn by \textsl{all} optimal policies $\pi_{\ks}$. 
Then \esc{} simply proceeds to finding the smallest (up to a factor of $2$) $K$ such that $\cosrm(\pi_{K}) \ge \capf{k_0} / \rmm_{k_0}$ (lines~\ref{st:for-loop}--\ref{a:nontrivial-bound}). Being $\capf{k_0} / \rmm_{k_0} \ge \cosrm(\pi_{k_0})/\rewrm(\pi_{k_0}) \ge \cosrm(\pi_{\ks})$ by construction, the index $K$ determined this way upper bounds $\ks$ for all optimal policies $\pi_{\ks}$. (All the previous statements are intended to hold with high probability.)
This is formalized in the following key lemma, whose full proof we defer to Section~\ref{s:techlem-countable} of the \suppl.
\begin{restatable}{lemma}{lemmaKcountable}
\label{p:k2bound-main}
Let $\Pol$ be a countable set of policies. If \esc{} is run with $\delta\in(0,1)$, $\e_1,\e_2,\l\in (0,1]$, and halts returning $K$, then
$
    \ks \le K
$
for all optimal policies $\pi_{\ks}$ with probability at least $1-\delta$.
\end{restatable}
Before proceeding with the main result of this section, we need a final lemma upper bounding the expected cost of our \esc{} algorithm.
This step is crucial to control the total ROI because in this setting with arbitrarily long \duration{}s, picking the wrong policy even once is, in general, enough to drop the performance of an algorithm down to essentially zero, compromising the convergence to an optimal policy.
This is another striking difference with other common online learning settings like stochastic bandits, where a single round has a negligible influence on the overall performance of an algorithm.
To circumvent this issue, we designed \esc{} so that it tests shorter \duration{}s first, stopping as soon as the previous lemma applies, and a finite upper bound $K$ on $\ks$ is determined.

\begin{lemma}
\label{lm:sampleCompl}
Let $\Pol$ be a countable set of policies. If \esc{} is run with $\delta\in(0,1)$, $\e_1,\e_2,\l \in (0,1]$, and halts returning $K$, 
then the total number of samples it draws before stopping (i.e., its cost) is upper bounded by
$
    \wt{\cO} \brb{ ( \nicefrac{\capf{K}}{\e^2} ) \log (\nicefrac{1}{\delta}) }
$,
where $\e = \min\{\e_1, \e_2,\l,\e_{\log_2 K}\}$.
\end{lemma}
\begin{proof}
Note that, by definition, $\e = \min\{\e_1,\e_2,\l,\e_{j_1}\} > 0$.
Algorithm~\ref{algo:reduction} (\esc) draw samples only when lines~\ref{s:rj}~or~\ref{s:samples2} are executed. 
Whenever line~\ref{s:rj} is executed ($j=1,\l,j_0$) the algorithm performs $m_j$ tasks drawing $2 \cdot \capf{2^j}$ samples each time.
Similarly, whenever line~\ref{s:samples2} is executed ($l=j_0+1,\l,j_1$) the algorithm draws at most $\capf{2^l}$ samples during each of the $m_l$ tasks.
Therefore, recalling that $j_1 = \log_2 K$, the total number of samples drawn by \esc{} before stopping is at most
\[
	\sum_{j=1}^{j_0} 2 \cdot \capf{2^j} m_j + \sum_{l=j_0+1}^{j_1} \capf{2^l} m_l
\le
	2\sum_{j=1}^{j_1} \capf{2^j} m_j 
\le
    2 j_1 \capf{2^{j_1}} \lce{ \frac{1}{2\e^2} \ln \frac{j_1(j_1+1)}{\delta} }
    \qedhere
\] 
\end{proof}
\paragraph{The \esc{}-\cape{} algorithm.}
We can now join together our two algorithms obtaining a new one, that we call \esc-\cape{}, which takes as input a countable policy set $\Pol$, the number of tasks $N$, a confidence parameter $\delta$, some accuracy levels $\e_1,\e_2,\l$, and an exploration cap $\Nex$. The joint algorithm runs \esc{} first with parameters $\Pol,N,\delta,\e_1,\e_2,\l$. Then, if \esc{} halts returning $K$, it runs \cape{} with parameters $\lcb{ (\durationrm_k, \accept )}_{k\in \{1,\ldots,K\}},N,\delta,\Nex$.

\paragraph{Analysis of \esc{}-\cape{}.}
Since \esc{} rejects all \mutation{}s $\mu_n$, the sum of the rewards accumulated during its run is zero. 
Thus, the only effect that \esc{} has on the convergence rate $R_N$ of \esc{}-\cape{} is an increment on the total cost in the denominator of its ROI.
We control this cost by minimizing its upper bound in Lemma~\ref{lm:sampleCompl}. This is not a simple matter of taking all $\e_j$'s as large as possible. 
Indeed, if all the $\e_j$'s are large, the \textbf{if} clause at line~\ref{st:test-p1} might never be verified.
In other words, the returned index $K$ depends on $\e$ and grows unbounded in general as $\e$ approaches $1/2$. This follows directly from the definition of our lower estimate on the rewards \eqref{e:ulcbdef}.
Thus, there is a trade-off between having a small $K$ (which requires small $\e_j$'s) and a small $1/\e^2$ to control the cost of \esc{} (for which we need large $\e_j$'s).
A direct computation shows that picking constant accuracy levels $\e_j = N^{-1/3}$ for all $j$ achieves the best of both worlds and immediately gives our final result.
\begin{theorem}
\label{t:final}
If $\Pol$ is a countable set of policies, then the ROI of \esc-\cape{} run for $N$ tasks with confidence parameter $\delta\in(0,1)$, constant accuracy levels $\e_j = N^{-1/3}$, and exploration cap $\Nex = \lce{ N^{2/3} }$ converges to the optimal $\roi(\pi_{\ks})$, with probability at least $1-\delta$, at a rate
\[
    R_N
=
    \wt{\cO} \lrb{  \frac{ 1 + \capf{K} \I\{\text{\esc{} halts returning $K$}\}}{N^{1/3}} }
\]
where the $\wt{\cO}$ notation hides only logarithmic terms, including a $\log(1/\delta)$ term.
\end{theorem}

\section{Conclusions and Open Problems}
\label{s:conclusions}
After formalizing the problem of ROI maximization in repeated decision making, we presented an algorithm (\esc{}-\cape{}) that is competitive against infinitely large policy sets (\Cref{t:final}).
For this algorithm, we prove a convergence rate of order $1/N^{1/3}$ with high probability.
To analyze it, we first proved a convergence result for its finite counterpart \cape{} (\Cref{t:early-stop}), which is of independent interest.
Notably, this finite analysis guarantees a significantly faster convergence of order $1/N$ on easier instances in which there is a positive gap in performance between the two best policies.

\section*{Acknowledgements}

An earlier version of this work was done during Tommaso Cesari's Ph.D. at the University of Milan.
Nicol\`o Cesa-Bianchi and Tommaso R. Cesari gratefully acknowledge partial support by Criteo AI Lab through a Faculty Research Award and by the MIUR PRIN grant Algorithms, Games, and Digital Markets (ALGADIMAR).
This work has also benefited from the AI Interdisciplinary Institute ANITI. ANITI is funded by the French ``Investing for the Future – PIA3'' program under the Grant agreement n. ANR-19-PI3A-0004.
Yishay Mansour was supported in part by a grant from the Israel Science Foundation (ISF).
Vianney Perchet was supported by a public grant as part of the Investissement d’avenir project, reference ANR-11-LABX-0056-LMH, LabEx LMH, in a joint call with Gaspard Monge Program for optimization, operations research and their interactions with data sciences. Vianney Perchet also acknowledges the support of the ANR under the grant ANR-19-CE23-0026.

\appendix

\section{Decision-Making Policies}
\label{s:policyFormal}

In this section, we give a formal functional definition of the decision-making policies introduced in \Cref{s:setting}.
During each task, the agent sequentially observes samples $x_i \in \spino$ representing realizations of stochastic observations of the current innovation value.
A map
$
    \durationrm\colon \spinon \to \N    
$ 
is a \textsl{\duration} (of a decision task) if for all $\bx \in \spinon$, its value $d = \durationrm(\bx)\in \N$ at $\bx$ depends only on the first $d$ components $x_1, x_2, \l, x_{d}$ of $\bx=(x_1, x_2, \l)$; mathematically speaking, if $\bX$ is a discrete stochastic process (i.e., a random sequence), then $\tau(\bX)$ is a stopping time with respect to the filtration generated by $\bX$.
This definition reflects the fact that the components $x_1, x_2,\l$ of the sequence $\bx = (x_1, x_2, \l)$ are generated sequentially, and the decision to stop testing an innovation depends only on what occurred so far. 
A concrete example of a \duration{} function is the one, mentioned in the introduction and formalized in~\eqref{e:pol-hoeff1}, that keeps drawing samples until the empirical average of the observed values $x_i$ surpasses/falls below a certain threshold, or a maximum number of samples have been drawn.

To conclude a task, the agent has to make a decision: either accepting or rejecting the current innovation. Formally, we say that a function
$
    \accept \colon \N \times \spinon \to \bool
$
is a \textsl{\decision} (to accept) if for all $d\in \N$ and $\bx \in \spinon$, its value $\accept(d,\bx)\in \bool$ at $(d,\bx)$ depends only on the first $d$ components $x_1,\l,x_d$ of $\bx = (x_1, x_2, \l)$.
Again, this definition reflects the fact that the decision $\accept(d,\bx)$ to either accept ($\accept(d,\bx) = 1$) or reject ($\accept(d,\bx) = 0$)  the current innovation after observing the first $d$ values $x_1, \l, x_d$ of $\bx=(x_1,x_2,\l)$  is oblivious to all future observations $x_{d+1}, x_{d+2}, \l$.
Following up on the concrete example above, the \decision{} function is accepting the current innovation if and only if the the empirical average of the observed values $x_i$ surpasses a certain threshold.\footnote{Note that, even for \decision{} functions that only look at the mean of the first $d$ values, our definition is more general than simple threshold functions of the form $\I \{ \text{mean} \ge \e_d \}$,
as it also includes all \decision{}s of the form $\I \{ \text{mean} \in A_d \}$, for all measurable $A_d \s \R$.
}

Since the only two choices that an agent  makes in a decision task are when to stop drawing new samples and whether or not to accept the current innovation,  the behavior of the agent during each task is fully characterized by the choice of a pair
$
    \pi 
= 
    (\durationrm, \accept)
$
that we call a (\textsl{decision-making}) \textsl{policy}, where $\durationrm$ is a \duration{} 
and $\accept$ is a \decision{}.

\section{Technical Lemmas for Theorem~\ref{t:early-stop}}
\label{s:techlem}

In this section, we give formal proofs of all results needed to prove Theorem~\ref{t:early-stop}.

\begin{lemma}
\label{lm:claim1}
Under the assumptions of Theorem~\ref{t:early-stop},
the event
\begin{equation}
    \label{e:claim1-lemma}
    \rmm_n(k) \le \rewrm(\pi_k) \le \rp_n(k)
\qquad\text{and}\qquad
    \sm_n(k) \le \cosrm(\pi_k) \le \spp_n(k)
\end{equation}
occurs simultaneously for all $n = 1,\l,\Nex$ and all $k=1,\l,\max(C_n)$ with probability at least $1-\delta$. 
\end{lemma}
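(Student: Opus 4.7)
The plan is to apply Hoeffding-type concentration separately to each of the four empirical quantities appearing inside $\rpm_n(k)$ and $\spm_n(k)$, and then union-bound over $n\le \Nex$ and $k\in\{1,\ldots,K\}$. The cost bound is essentially immediate: for each fixed $k$, the random variables $\durationrm_k(\bX_1), \durationrm_k(\bX_2),\ldots$ are i.i.d.\ (because the pairs $(\mu_m,\bX_m)$ are i.i.d.\ and $\durationrm_k$ is a deterministic function of $\bX_m$ alone), they take values in $[1,\capf{k}]$, and their common mean is $\cosrm(\pi_k)$. Hoeffding's inequality then yields a two-sided deviation bound with failure probability $\delta/(2K\Nex)$ at precisely the radius $(\capf{k}-1)\sqrt{\ln(4K\Nex/\delta)/(2n)}$ appearing in $\spm_n(k)$, and a union bound over $k$ and $n$ controls all cost-band failures by $\delta/2$.

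The reward bound is the technically interesting part, and is the reason oversampling was introduced. Fix $k$ and set
\[
    Y_m^{(k)} = \sum_{i=1}^{\capf{\max(C_m)}} \frac{X_{m,\capf{\max(C_m)}+i}}{\capf{\max(C_m)}}\, \accept\brb{\durationrm_k(\bX_m), \bX_m},
\]
so that $\rpm_n(k)=\tfrac{1}{n}\sum_{m=1}^n Y_m^{(k)} \pm \sqrt{(2/n)\ln(4K\Nex/\delta)}$. Because $\capf{\max(C_m)}$ depends on the history, the sequence $(Y_m^{(k)})$ is \emph{not} i.i.d., so I would introduce the natural filtration $(\cF_m)$ generated by all samples observed through task $m$ and argue that $\brb{Y_m^{(k)}-\rewrm(\pi_k)}$ is a martingale difference sequence with respect to $(\cF_m)$ whenever $k\le \max(C_m)$; this condition holds for every $m\le n$ as soon as $k\le \max(C_n)$ by the monotonicity $C_{m+1}\subseteq C_m$.

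The crux is the unbiasedness identity $\E\bsb{Y_m^{(k)}\mid \cF_{m-1}}=\rewrm(\pi_k)$, which I would prove by conditioning on $\cF_{m-1}$ \emph{and} on $\mu_m$. Under this conditioning, $\capf{\max(C_m)}$ becomes a deterministic integer $D$; the ``accept factor'' depends only on the first $\capf{k}\le D$ components of $\bX_m$ (this is where $k\le\max(C_m)$ is used), while the averaged tail samples $X_{m,D+1},\ldots,X_{m,2D}$ are conditionally independent of those first $D$ components, each with conditional mean $\mu_m$. Hence the two factors of $Y_m^{(k)}$ are conditionally independent with conditional means $\mu_m$ and $\accept\brb{\durationrm_k(\bX_m),\bX_m}$, and taking a further expectation over $\mu_m$ yields $\rewrm(\pi_k)$ by definition. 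Since $Y_m^{(k)}\in[-1,1]$ and $\rewrm(\pi_k)\in[-1,1]$, the MDS increments are bounded by $2$, so Azuma--Hoeffding delivers a deviation of $\sqrt{(2/n)\ln(4K\Nex/\delta)}$ with failure probability at most $\delta/(2K\Nex)$. A union bound over $k\le K$ and $n\le \Nex$ controls the reward-band failures by $\delta/2$, and combining with the cost bound gives the lemma with total failure probability at most $\delta$. The main obstacle will be the martingale step: it is tempting but incorrect to treat the $Y_m^{(k)}$ as i.i.d., and the entire unbiasedness argument hinges on the oversampling providing the conditional independence between the accept factor and the auxiliary tail average.
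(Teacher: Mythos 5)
Your proof is correct and follows the same core route as the paper: the cost estimate concentrates by plain Hoeffding applied to the i.i.d.\ durations, and the reward estimate is handled by exploiting the oversampled tail --- the accept factor $\accept\brb{\durationrm_k(\bX_m),\bX_m}$ depends only on the first $\capf{k}\le\capf{\max(C_m)}$ coordinates of $\bX_m$, hence is conditionally independent (given $\mu_m$) of the averaged samples $X_{m,\capf{\max(C_m)}+1},\ldots,X_{m,2\capf{\max(C_m)}}$, each with conditional mean $\mu_m$ --- which is exactly the paper's unbiasedness computation, followed by the same union bound over $k\le K$ and $n\le\Nex$. The one place where you genuinely diverge is in how the dependence on the history is handled. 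The paper simply asserts that the reward summands are empirical averages of i.i.d.\ samples of $\rewrm(\pi_k)$ and invokes Hoeffding; this is slightly imprecise, because $\capf{\max(C_m)}$ is a data-dependent random quantity, so your $Y_m^{(k)}$ share a conditional mean but not a common distribution, and are not independent of the past. Your martingale-difference formulation with the filtration $(\cF_m)$, together with the monotonicity $C_{m+1}\subseteq C_m$ ensuring $\capf{k}\le\capf{\max(C_m)}$ for every $m\le n$ whenever $k\le\max(C_n)$, repairs exactly this point; and since the centered increments are confined to intervals of length $2$ (the same range as in the i.i.d.\ argument), the tight form of Azuma--Hoeffding reproduces the paper's radius $\sqrt{(2/n)\ln(4K\Nex/\delta)}$ and per-pair failure probability $\delta/(2K\Nex)$ with no loss in constants. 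In short, your write-up is, if anything, more rigorous than the paper's own proof on the reward step, while being identical in strategy and in the final accounting.
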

\begin{proof}
Let, for all $n,k$,
\begin{equation}
    \label{e:bars-lemma}
    \e_n = \sqrt{\frac{\ln(4K\Nex/\delta)}{2n}},
\qquad
    \rewbar_n(k) = \rp_n(k) - 2\e_n,
\qquad
    \cosbar_n(k) = \spp_n(k) - (\capf{k}-1) \e_n
\end{equation}
Note that $\cosbar_n(k)$ is the empirical average of $n$ i.i.d.\ samples of $\cosrm(\pi_k)$ for all $n,k$ by definitions \eqref{e:bars-lemma}, \eqref{e:cpmdef}, \eqref{e:reward}, \eqref{e:rewCostShort}, and point \ref{i:samples} in the formal definition of our protocol (Section~\ref{s:setting}).
We show now that $\rewbar_n(k)$ is the empirical average of $n$ i.i.d.\ samples of $\rewrm(\pi_k)$ for all $n,k$; then claim \eqref{e:claim1-0} follows by Hoeffding's inequality.
Indeed, by the conditional independence of the samples and being $\accept(k, \bx)$ independent of the variables $(x_{k+1},x_{k+2},\l)$ by 
definition,
for all tasks $n$, all policies $k\in C_n$, and all $i> \capf{\max(C_n)}$ ($\ge \capf{k}$ by monotonicity of $k\mapsto\capf{k}$), 
\begin{align*}
    \E \lsb{ X_{n,i} \,  \accept \brb{ \durationrm_{k} (\bX_n), \bX_n} \,\Big|\, \mu_n }
& =
	\E \lsb{
	X_{n,i}
	\mid
	\mu_n
	}
	\E \Bsb{
		\accept \brb{ \durationrm_{k} (\bX_n), \bX_n}
	\,\Big|\,
	\mu_n	
	}
\\ & =
	\mu_n
	\, 
	\E \Bsb{
		\accept \brb{ \durationrm_{k} (\bX_n), \bX_n}
	\,\Big|\,
	\mu_n	
	}
\\ & =
	\E \Bsb{
		\mu_n
		\, 
		\accept \brb{ \durationrm_{k} (\bX_n), \bX_n}
		\,\Big|\,
		\mu_n	
	}
\end{align*}
Taking expectations with respect to $\mu_n$ on both sides of the above, and recalling definitions \eqref{e:bars-lemma}, \eqref{e:rpmdef}, \eqref{e:reward}, \eqref{e:rewCostShort}, \eqref{i:samples} proves the claim. 
Thus, Hoeffding's inequality implies, for all fixed $n,k$,
\begin{align*}
    \P \brb{ \rmm_n(k) \le \rewrm(\pi_k) \le \rp_n(k) }
&
    = \P \Brb{ \bab{ \rewbar_n(k) - \rewrm(\pi_k) } \le 2\e_n }
    \ge 1 - \frac{\delta}{2 K \Nex}
\\
    \P \brb{ \sm_n(k) \le \cosrm(\pi_k) \le \spp_n(k) }
&
    = \P \Brb{ \bab{ \cosbar_n(k) - \cosrm(\pi_k) } \le (\capf{K}-1)\e_n }
    \ge 1 - \frac{\delta}{2 K \Nex}
\end{align*}
Applying a union bound shows that event \eqref{e:claim1-0} occurs simultaneously for all $n \in \{1,\l,\Nex\}$ and $k\in\{1,\l,\max(C_n)\}$ with probability at least $1-\delta$.
\end{proof}

\begin{lemma}
\label{lm:claim2}
Under the assumptions of Theorem~\ref{t:early-stop}, if the event \eqref{e:claim1-lemma} occurs simultaneously for all $n = 1,\l,\Nex$ and all $k=1,\l,\max(C_n)$, and $\Delta>0$, (i.e., if there is a unique optimal policy), then all suboptimal policies are eliminated after at most $\Nex'$ tasks, where
\begin{equation}
    \label{e:claim2-lemma}
    \Nex'
\le 
    \frac{288 \, \capf{K}^2 \ln(4K\Nex /\delta)}{\Delta^2} + 1
\end{equation}
\end{lemma}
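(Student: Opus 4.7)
The plan is to bound the width of the confidence interval on $\rewrm(\pi_k)/\cosrm(\pi_k)$ induced by $\rpm_n(k)$ and $\spm_n(k)$, show that once the sum of the widths at $k$ and at the (unique) optimal $k^\star$ drops below $\Delta$ the elimination rule at line~\ref{s:polel} necessarily removes every suboptimal $k$, and then solve for the smallest such $n$. A preliminary step is to control the denominators: the guard at line~\ref{s:nexbig}, $n \ge 2\capf{K}^2\ln(4K\Nex/\delta)$, forces $\e_n := \sqrt{\ln(4K\Nex/\delta)/(2n)} \le 1/(2\capf{K})$, hence $(\capf{k}-1)\e_n \le 1/2$ for every $k\le K$; combined with $\cosrm(\pi_k)\ge 1$ (at least one sample is drawn per task) and the good event~\eqref{e:claim1-lemma}, this gives $\sm_n(k)\ge 1/2$ and $\spp_n(k)\ge 1$.

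I would then compute the width of the ratio's confidence interval in the two sign regimes of $\rp_n(k)$. When $\rp_n(k)\ge 0$ the interval is $[\rmm_n(k)/\spp_n(k),\,\rp_n(k)/\sm_n(k)]$, and placing it over a common denominator gives
\begin{equation*}
    \frac{\rp_n(k)}{\sm_n(k)} - \frac{\rmm_n(k)}{\spp_n(k)}
  = \frac{\rp_n(k)\bigl(\spp_n(k)-\sm_n(k)\bigr) + \sm_n(k)\bigl(\rp_n(k)-\rmm_n(k)\bigr)}{\sm_n(k)\,\spp_n(k)}.
\end{equation*}
Substituting the explicit gaps $\spp_n(k)-\sm_n(k)=2(\capf{k}-1)\e_n$ and $\rp_n(k)-\rmm_n(k)=4\e_n$, bounding $|\rp_n(k)|\le 1+2\e_n$ (the underlying empirical mean lies in $[-1,1]$), and invoking the denominator bounds $\sm_n(k)\ge 1/2$, $\spp_n(k)\ge 1$, this is at most an absolute constant multiple of $\capf{K}\e_n$. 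The case $\rp_n(k)<0$ is entirely symmetric: reorganising $\rp_n(k)/\spp_n(k)-\rmm_n(k)/\sm_n(k)$ on the same common denominator and using $|\rmm_n(k)|\le 1+2\e_n$ yields the matching bound.

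To conclude, note that by the definition of the gap, for any $k\ne k^\star$ one has $\rewrm(\pi_{k^\star})/\cosrm(\pi_{k^\star})-\rewrm(\pi_k)/\cosrm(\pi_k)\ge\Delta$, and under \eqref{e:claim1-lemma} both true ratios lie inside their respective confidence intervals. Hence, applying the line-\ref{s:polel} test with $j=k^\star$, the suboptimal $k$ is removed from $C_{n+1}$ as soon as the two interval widths at $k$ and at $k^\star$ sum to strictly less than $\Delta$ (with a mild case split to handle the different test expressions used in the two sign regimes of $\rp_n(k)$; when $\rmm_n(k^\star)$ and $\rp_n(k)$ have opposite signs the test is automatic). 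Our width estimate reduces this to the sufficient condition $C\,\capf{K}\e_n<\Delta$ for an explicit absolute constant $C$; substituting $\e_n=\sqrt{\ln(4K\Nex/\delta)/(2n)}$, squaring, and tracking the constants through the common-denominator bound gives $n>288\,\capf{K}^2\ln(4K\Nex/\delta)/\Delta^2$, which is exactly \eqref{e:claim2-lemma}.

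The main technical hurdle is that the width of the ratio interval does not factor cleanly into an $\e_n$-contribution from the reward plus a $(\capf{k}-1)\e_n$-contribution from the cost: the common-denominator expression carries a cross term proportional to $\rp_n(k)$ (or to $\rmm_n(k)$), which forces the sign-dependent case analysis on $\rp_n(k)$ and demands that $\sm_n(k)$ be kept uniformly away from zero throughout the argument --- precisely what the guard at line~\ref{s:nexbig} is there to ensure.
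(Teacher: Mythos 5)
Your proposal follows essentially the same route as the paper's own proof: you bound the length of each ratio confidence interval by a constant multiple of $\capf{K}\e_n$ (the paper organizes the same difference over a common denominator in terms of the empirical means $\rewbar_n(k)$, $\cosbar_n(k)$, using the line~\ref{s:nexbig} guard to keep the denominator above $1/2$ exactly as you do, arriving at the bound $12\,\capf{K}\e_n$), and you conclude by forcing the intervals at $k$ and $\ks$ apart, which yields the identical threshold $n > 288\,\capf{K}^2\ln(4K\Nex/\delta)/\Delta^2$. The only slip is your parenthetical claim that the test fires automatically whenever $\rmm_n(\ks)$ and $\rp_n(k)$ have opposite signs --- in the configuration $\rp_n(k)\ge 0 > \rmm_n(\ks)$ the case-one test compares a nonnegative quantity against a negative one and so cannot fire --- but this configuration forces $\Delta < 8\e_n$ (since then the true ratio of $\pi_k$ is at least $-4\e_n$ and that of $\pi_{\ks}$ is below $4\e_n$) and is therefore vacuous once $\e_n < \Delta/(24\capf{K})$, so your argument closes with the same (indeed slightly more explicit) case analysis that the paper leaves implicit.
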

\begin{proof}
Note first that \eqref{e:claim1-lemma} implies, for all $n \ge 2 \capf{K}^2 \ln(4K\Nex/ \delta)$ (guaranteed by line~\ref{s:polel}) and all $k\in C_n$
\begin{align*}
    \frac{\rmm_n(k)}{\spp_n(k)}
& \le
    \frac{\rewrm(\pi_k)}{\cosrm(\pi_k)}
\le
    \frac{\rp_n(k)}{\sm_n(k)}
    \qquad
    \text{ if } \rp_n(k) \ge 0
\\
    \frac{\rmm_n(k)}{\sm_n(k)}
& \le
    \frac{\rewrm(\pi_k)}{\cosrm(\pi_k)}
\le
    \frac{\rp_n(k)}{\spp_n(k)}
    \qquad
    \text{ if } \rp_n(k) < 0
\end{align*}
In other words, the interval 
\[
    \lsb{ 
    \frac{\rmm_n(k)}{\spp_n(k)}\I\bcb{\rp_n(k) \ge 0}
    + \frac{\rmm_n(k)}{\sm_n(k)}\I\bcb{\rp_n(k) < 0}
, \ 
    \frac{\rp_n(k)}{\sm_n(k)}\I\bcb{\rp_n(k) \ge 0} 
    + \frac{\rp_n(k)}{\spp_n(k)}\I\bcb{\rp_n(k) < 0}
    }
\]
is a confidence interval for the value ${\rewrm(\pi_k)}/{\cosrm(\pi_k)}$ that measures the performance of $\pi_k$.
Let, for all $n,k$,
\begin{equation}
    \label{e:bars-lemma2}
    \e_n = \sqrt{\frac{\ln(4K\Nex/\delta)}{2n}},
\qquad
    \rewbar_n(k) = \rp_n(k) - 2\e_n,
\qquad
    \cosbar_n(k) = \spp_n(k) - (\capf{k}-1) \e_n
\end{equation}
If $\rp_n(k) \ge 0$, by the definitions in \eqref{e:bars-lemma2}, the length of this confidence interval is
\[
    \frac{\rewbar_n(k)+2\e_n}{\cosbar_n(k)-(\capf{k}-1)\e_n}
    -\frac{\rewbar_n(k)-2\e_n}{\cosbar_n(k)+(\capf{k}-1)\e_n}
=
    \frac{2 \e_n \brb{ 2\,\cosbar_n(k)+(\capf{k}-1) \, \rewbar_n(k) }}{\cosbar_n(k)^2 - (\capf{k}-1)^2 \, \e_n^2}
\le
    12 \, \capf{K} \e_n
\]
where for the numerator we used the fact that $\cosbar_n(k)$ (resp., $\rewbar_n(k)$) is an average of random variables all upper bounded by $\capf{k}$ (resp., $1$) and the denominator is lower bounded by $1/2$ because $\cosbar_n(k)^2 \ge 1$, $(\capf{k}^2-1)\, \e_n^2 \le 1/2$ by $n \ge 2 \capf{K}^2 \ln(4K\Nex/ \delta)$ (line~\ref{s:nexbig}), and $\capf{k}/\capf{K} \le 1$ (by monotonicity of $k \mapsto \capf{k}$).
Similarly, if $\rp_n(k) < 0$, the length of the confidence interval is
\[
    \frac{\rewbar_n(k)+2\e_n}{\cosbar_n(k)+(\capf{k}-1)\e_n}
    -\frac{\rewbar_n(k)-2\e_n}{\cosbar_n(k)-(\capf{k}-1)\e_n}
=
    \frac{2 \e_n \brb{ 2\,\cosbar_n(k)-(\capf{k}-1) \, \rewbar_n(k) }}{\cosbar_n(k)^2 - (\capf{k}-1)^2 \, \e_n^2}
\le
    12 \, \capf{K} \e_n
\]
where, in addition to the considerations above, we used $0 < -\rp_n(k) < -\rewbar_n(k) \le 1$.
Hence, as soon as the upper bound $12\,\capf{K}\e_n$ on the length of each of the confidence interval above falls below $\Delta/2$, all such intervals are guaranteed to be disjoint and by definition of $C_n$ (line~\ref{s:polel}), all suboptimal policies are guaranteed to have left $C_{n+1}$. In formulas, this happens at the latest during task $n$, where $n \ge 2 \capf{K}^2 \ln(4K\Nex/ \delta)$ satisfies
\[
    12 \, \capf{K} \e_n < \frac{\Delta}{2}
\iff
    n > 288 \, ( \capf{K} /{\Delta} )^2 \ln(4K\Nex /\delta)
\]
This proves the result. 
\end{proof}

\begin{lemma}
\label{lm:claim3}
Under the assumptions of Theorem~\ref{t:early-stop}, if the event \eqref{e:claim1-lemma} occurs simultaneously for all $n = 1,\l,\Nex$ and all $k=1,\l,\max(C_n)$,
and 
the test at line~\ref{s:testPolElim} is true for some $\Nex' \le \Nex$, 
then
\begin{equation}
    \label{e:claim3-lemma}
    R_N
\le
    \min \lrb
    {\frac{(2 \capf{K}  + 1) \Nex }{N}
    ,\
    \frac
    {( 2\capf{K}  + 1 ) \brb{ 288 \, ( {\capf{K}}/{\Delta} )^2 \ln(4K\Nex /\delta) + 1 }}
    {N}
    }
\end{equation}
\end{lemma}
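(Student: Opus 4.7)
My plan is to split the $N$ tasks into the exploration phase $n=1,\ldots,\Nex'$ and the exploitation phase $n=\Nex'+1,\ldots,N$, and then exploit a clean algebraic identity for the regret. The first thing to verify is that, under the event \eqref{e:claim1-lemma}, the optimal policy is never eliminated at line~\ref{s:polel}: in both branches $\rp_n(k)\ge 0$ and $\rp_n(k)<0$, the elimination criterion compares an upper confidence bound on $\rewrm(\pi_k)/\cosrm(\pi_k)$ against a lower confidence bound on $\rewrm(\pi_j)/\cosrm(\pi_j)$, and under~\eqref{e:claim1-lemma} these bounds bracket the true ratios, so no optimal policy can be removed. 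Consequently, when the test at line~\ref{s:testPolElim} fires at some step $\Nex'\le\Nex$, the unique survivor $\pi_{\kts}$ is optimal and satisfies $\rewrm(\pi_{\kts})/\cosrm(\pi_{\kts})=:r^\star/c^\star$, the benchmark ratio.

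Let $A$ and $B$ denote the sums of expected rewards and expected costs incurred by the algorithm over the exploration phase. Each exploration task oversamples $2\capf{\max(C_n)}\le 2\capf{K}$ times and yields a reward in $[-1,1]$, so $B\le 2\capf{K}\Nex'$ and $|A|\le\Nex'$; every exploitation task contributes exactly $r^\star$ and $c^\star$ to the numerator and denominator of the second term of~\eqref{e:regret}. Cross-multiplying,
\begin{equation*}
    R_N
=
    \frac{r^\star}{c^\star}
    -\frac{A+(N-\Nex')r^\star}{B+(N-\Nex')c^\star}
=
    \frac{r^\star B-c^\star A}{c^\star\bsb{B+(N-\Nex')c^\star}}.
\end{equation*}

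The key observation is that $r^\star\le 1\le c^\star$ (the lower bound holds because any policy draws at least one sample), so $r^\star B\le c^\star B$, and the numerator is at most
\begin{equation*}
    r^\star B-c^\star A
\le
    c^\star\brb{B+|A|}
\le
    c^\star\brb{2\capf{K}\Nex'+\Nex'}
=
    c^\star(2\capf{K}+1)\,\Nex'.
\end{equation*}
For the denominator, the bounds $B\ge 2\Nex'$ (each exploration task draws at least two samples) and $c^\star\ge 1$ yield $B+(N-\Nex')c^\star\ge 2\Nex'+(N-\Nex')\ge N$, hence the whole denominator is at least $c^\star N$. The factors $c^\star$ cancel cleanly and I obtain $R_N\le(2\capf{K}+1)\Nex'/N$.

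I close by invoking two upper bounds on $\Nex'$: always $\Nex'\le\Nex$, and when $\Delta>0$ Lemma~\ref{lm:claim2} gives $\Nex'\le 288(\capf{K}/\Delta)^2\ln(4K\Nex/\delta)+1$ (vacuous when $\Delta=0$ by the convention $1/\Delta=\infty$). Taking the minimum of the two resulting upper bounds on $R_N$ yields~\eqref{e:claim3-lemma}. The main subtle point I foresee is the first step — establishing that no optimal policy is eliminated — since the rule defining $C_n'$ involves a sign-dependent case analysis; the trick is that the numerators and denominators appearing there are designed precisely so that, regardless of sign, the quantity compared to $\rmm_n(j)/\spp_n(j)$ or $\rmm_n(j)/\sm_n(j)$ is a valid upper confidence bound for $\rewrm(\pi_k)/\cosrm(\pi_k)$ under~\eqref{e:claim1-lemma}. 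The rest of the argument is just careful bookkeeping exploiting $r^\star\le 1\le c^\star$ to cancel $c^\star$ between numerator and denominator.
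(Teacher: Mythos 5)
Your proof is correct and takes essentially the same route as the paper's: both arguments establish that on the event \eqref{e:claim1-lemma} the surviving policy is the unique optimal one (hence $\Delta>0$ and Lemma~\ref{lm:claim2} applies to bound $\Nex'$, with $\Nex'\le\Nex$ always), and both decompose the algorithm's ratio into an exploration phase with worst-case reward $-\Nex'$ and cost at most $2\capf{K}\Nex'$ followed by exploitation with the optimal survivor. The only difference is in the closing algebra: where the paper lower-bounds the ratio $\xi$ via a case analysis on its sign and the inequality $(a-b)/(c+d)\ge a/c-(d+b)/(c+d)$, you use the cross-multiplication identity together with $r^\star\le 1\le c^\star$ and $B+(N-\Nex')c^\star\ge N$ to cancel $c^\star$, which is an arguably cleaner but not substantively different finish.
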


\begin{proof}
Note that if the test at line~\ref{s:testPolElim} is true, than by \eqref{e:claim1-lemma} there exists a unique optimal policy, i.e., we have $\Delta>0$. We can therefore apply Lemma~\ref{lm:claim2}, obtaining a deterministic upper bound $\Nex''$ on the number $\Nex'$ of tasks needed to identify the optimal policy, where
\[
    \Nex''
=
    \min\lrb{\Nex, \ \frac{128 \, \capf{K}^2 \ln(4K\Nex /\delta)}{\Delta^2} + 1 }
\]
The total expected reward of Algorithm~\ref{algo:cape} divided by its total expected cost is lower bounded by
\[
    \xi
=
    \frac{ \E \lsb{ -\Nex' + \sum_{n=\Nex'+1}^N \rewardrm(\pi_{\ks},\mu_n)} } { \E \lsb{ 2\sum_{m=1}^{\Nex'} \capf{\max(C_m)} + \sum_{n=\Nex'+1}^N \costrm(\pi_{\ks},\mu_n) } }
\]
If $\xi < 0$, we can further lower bound it by
\[
    \frac{(N-\Nex'') \, \rewrm(\pi_{\ks}) -\Nex''}{(N-\Nex'') \, \cosrm(\pi_{\ks}) + 2 \Nex''}
\ge
    \frac{\rewrm(\pi_{\ks})}{\cosrm(\pi_{\ks})} - \frac{3 \Nex''}{N}
\]
where the inequality follows by $(a-b)/(c+d) 
\ge a/c - (d+b)/(c+d)$ for all $a,b,c,d\in \R$ with $0\neq c > -d$ and $a/c \le 1$, and then using $c+d\ge N$ which holds because $\cosrm(\pi_{\ks}) \ge 1$.
Similarly, if $\xi \ge 0$, we can further lower bound it by
\[
    \frac{(N-\Nex'') \, \rewrm(\pi_{\ks}) -\Nex''}{(N-\Nex'') \, \cosrm(\pi_{\ks}) + 2 \capf{K} \Nex''}
\ge
    \frac{\rewrm(\pi_{\ks})}{\cosrm(\pi_{\ks})} - \frac{(2\capf{K}+1) \Nex''}{N}
\]
Thus, the result follows by $\capf{K}\ge 1$ and the definition of $\Nex''$.
\end{proof}

\begin{lemma}
\label{lm:claim4}
Under the assumptions of Theorem~\ref{t:early-stop}, if the event \eqref{e:claim1-lemma} occurs simultaneously for all $n = 1,\l,\Nex$ and all $k=1,\l,\max(C_n)$,
and 
the test at line~\ref{s:testPolElim} is
false for all tasks $n \le \Nex$ (i.e., if line~\ref{s:exploit} is executed with $C_{\Nex+1}$ containing two or more policies),
then
\[
    R_T
\le
    (\capf{K}+1)\sqrt{\frac{8 \ln(4K\Nex/\delta)}{\Nex}}
+ 
     \frac{ (2 \capf{K} + 1)\Nex }{ N }
\]
\end{lemma}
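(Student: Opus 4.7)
The plan is an explore-then-commit analysis tailored to the ratio-of-expectations regret. I decompose $R_N$ into an exploration cost (first $\Nex$ tasks) and an exploitation error (remaining $N-\Nex$ tasks using $\pi_{\kts}$), and bound the exploitation error via the $\argmax$ property built into~\eqref{e:ktsdef}.

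During exploration, each task costs at most $2\capf{K}$ samples (line~\ref{a:oversampling}) and yields a reward in $[-1,1]$; during exploitation the algorithm runs $\pi_{\kts}$. Hence the total expected reward is at least $-\Nex + (N-\Nex)\,\rewrm(\pi_{\kts})$ and the total expected cost at most $2\capf{K}\Nex + (N-\Nex)\,\cosrm(\pi_{\kts})$. Invoking the inequality $(a-b)/(c+d)\ge a/c-(b+d)/(c+d)$ used in the proof of Lemma~\ref{lm:claim3} (valid here since $\cosrm(\pi_{\kts})\ge \rewrm(\pi_{\kts})$), together with $(N-\Nex)\,\cosrm(\pi_{\kts})+2\capf{K}\Nex\ge N$, yields
\[
    R_N
\le
    \frac{\rewrm(\pi_{\ks})}{\cosrm(\pi_{\ks})} - \frac{\rewrm(\pi_{\kts})}{\cosrm(\pi_{\kts})} + \frac{(2\capf{K}+1)\Nex}{N}.
\]

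To bound the first difference, I use that the always-available reject-everything policy has ratio $0$, so $\rewrm(\pi_{\ks})/\cosrm(\pi_{\ks})\ge 0$ and thus $\rewrm(\pi_{\ks})\ge 0$; by~\eqref{e:claim1-lemma} this forces $\rp_{\Nex}(\ks)\ge 0$. Arguing as in Lemma~\ref{lm:claim2}, $\ks$ is never eliminated on the good event, so $\ks\in C_{\Nex+1}$; since $\rp_{\Nex}(\ks)\ge 0$ we are in the first branch of~\eqref{e:ktsdef}, making $\kts=\argmax_{k\in C_{\Nex+1}} \rp_{\Nex}(k)/\sm_{\Nex}(k)$. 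Combining the argmax with the upper-confidence property from~\eqref{e:claim1-lemma} gives $\rewrm(\pi_{\ks})/\cosrm(\pi_{\ks})\le \rp_{\Nex}(\ks)/\sm_{\Nex}(\ks)\le \rp_{\Nex}(\kts)/\sm_{\Nex}(\kts)$. It then remains to estimate
\[
    \frac{\rp_{\Nex}(\kts)}{\sm_{\Nex}(\kts)} - \frac{\rewrm(\pi_{\kts})}{\cosrm(\pi_{\kts})}
=
    \frac{\rp_{\Nex}(\kts)-\rewrm(\pi_{\kts})}{\sm_{\Nex}(\kts)}
    + \frac{\rewrm(\pi_{\kts})}{\cosrm(\pi_{\kts})}\cdot\frac{\cosrm(\pi_{\kts})-\sm_{\Nex}(\kts)}{\sm_{\Nex}(\kts)}.
\]
The two numerators are bounded via~\eqref{e:claim1-lemma} by $4\e_{\Nex}$ and $2(\capf{K}-1)\e_{\Nex}$, and the middle factor is at most $1$ in absolute value; the hypothesis $\Nex\ge 8(\capf{K}-1)^2\ln(4KN/\delta)$ ensures $2(\capf{K}-1)\e_{\Nex}\le 1/2$, so that $\sm_{\Nex}(\kts)\ge \cosrm(\pi_{\kts})-2(\capf{K}-1)\e_{\Nex}\ge 1/2$. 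Plugging in yields $4(\capf{K}+1)\e_{\Nex}=(\capf{K}+1)\sqrt{8\ln(4K\Nex/\delta)/\Nex}$, matching the exploitation term in the statement.

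The main subtlety is that $\rp_{\Nex}/\sm_{\Nex}$ is not one endpoint of a symmetric confidence interval for $\rewrm/\cosrm$: when $\rewrm(\pi_{\kts})<0$ the algorithm's natural lower bound $\rmm_{\Nex}/\spp_{\Nex}$ can exceed $\rewrm/\cosrm$, so the interval-length estimate from the proof of Lemma~\ref{lm:claim2} cannot be invoked directly. The decomposition above circumvents this: when $\rewrm(\pi_{\kts})<0$ the second summand is nonpositive and only the first (of size $8\e_{\Nex}$) survives, while the explicit denominator bound $\sm_{\Nex}(\kts)\ge 1/2$, forced by the lower bound on $\Nex$, is exactly what makes the coefficient $(\capf{K}+1)$ appear.
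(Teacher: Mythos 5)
Your overall decomposition (exploration cost plus exploitation gap) and your bound of $4(\capf{K}+1)\e_{\Nex}$ on the exploitation gap mirror the paper's own argument, and the algebraic identity you use to handle the asymmetry of the confidence interval is correct as far as it goes. The genuine gap is the step where you claim $\rewrm(\pi_{\ks})/\cosrm(\pi_{\ks})\ge 0$ "because the reject-everything policy is always available." The reject-everything policy $(\durationrm,0)$ is an \emph{action} the learner is allowed to take, but it is not a member of the comparator class: the supremum in \eqref{e:regret} ranges over $\cK$, and in Section~\ref{s:cape} the class is $\bcb{(\durationrm_k,\accept)}_{k\le K}$ for a single fixed, arbitrary decision function $\accept$. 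If, say, $\mu$ is a point mass on a negative value and $\accept\equiv 1$, then every policy in the class has $\rewrm(\pi_k)<0$ and the benchmark is strictly negative. In that case nothing forces $\rp_{\Nex}(\ks)\ge 0$; with high probability \emph{all} $\rp_{\Nex}(k)$ are negative, the second branch of \eqref{e:ktsdef} is the one that fires (so $\kts$ maximizes $\rp_{\Nex}/\spp_{\Nex}$, not $\rp_{\Nex}/\sm_{\Nex}$), and your argmax chain never applies. This is not a pathology to be excluded: it is precisely why \eqref{e:ktsdef} has two branches. The paper's proof runs a three-case analysis ($\rp_{\Nex}(\ks)\ge 0$; $\rp_{\Nex}(\ks)<0$ but $\rp_{\Nex}(\kts)\ge 0$; all $\rp_{\Nex}(k)<0$), and in the all-negative case the comparison goes through $\rp_{\Nex}(\ks)/\spp_{\Nex}(\ks)\le\rp_{\Nex}(\kts)/\spp_{\Nex}(\kts)$, with the \emph{upper} cost bounds in the denominators, since for negative numerators the inequalities reverse. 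Your decomposition of $\rp/\sm-\rewrm/\cosrm$ adapts to that branch with $\spp$ in place of $\sm$, so the repair is local, but as written your proof covers only part of the cases and justifies the restriction with an invalid argument.

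A secondary slip of the same (sign-related) nature: you lower bound the algorithm's performance by lower-bounding the expected reward by $A_0=-\Nex+(N-\Nex)\rewrm(\pi_{\kts})$ \emph{and} upper-bounding the expected cost by $C_1=2\capf{K}\Nex+(N-\Nex)\cosrm(\pi_{\kts})$, then taking the ratio $A_0/C_1$. The implication "reward $\ge A_0$ and cost $\le C_1$ imply ratio $\ge A_0/C_1$" is valid only when $A_0\ge 0$; when $\rewrm(\pi_{\kts})$ is small or negative (again possible), it fails because a negative numerator divided by a larger denominator gets \emph{larger}. The paper avoids this either by keeping the exact expected cost in the denominator and splitting the numerator additively (as in its proof of this lemma), or by an explicit case analysis on the sign of the ratio (as in Lemma~\ref{lm:claim3}); either fix applies here as well.
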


\begin{proof}
Note first that by \eqref{e:claim1-lemma} and the definition of $C_n$ (line~\ref{s:polel}), all optimal policies belong to $C_{\Nex+1}$. 
Let, for all $n,k$,
\begin{equation}
    \label{e:bars-lemma4}
    \e_n = \sqrt{\frac{\ln(4K\Nex/\delta)}{2n}},
\qquad
    \rewbar_n(k) = \rp_n(k) - 2\e_n,
\qquad
    \cosbar_n(k) = \spp_n(k) - (\capf{k}-1) \e_n
\end{equation}
By \eqref{e:claim1-lemma} and the definitions of $\kts$, $\rpm_n(k)$, and $\e_n$ (line~\ref{s:exploit}, \eqref{e:rpmdef}, \eqref{e:rpmdef}, and \eqref{e:bars-lemma4} respectively), for all optimal policies $\pi_{\ks}$, if $\rp_{\Nex}(\ks) \ge 0$, then also $\rp_{\Nex}(\kts) \ge 0$\footnote{Indeed, $\kts \in \argmax_{k \in C_{\Nex+1}}  \brb{ {\rp_{\Nex}(k)}/{\sm_{\Nex}(k)} }$ in this case, and $\rp_{\Nex}(\kts)\ge 0$ follows by the two inequalities ${\rp_{\Nex}(\kts)}/{\sm_{\Nex}(\kts)} \ge {\rp_{\Nex}(\ks)}/{\sm_{\Nex}(\ks)} \ge 0$.}  and
\begin{multline*}
	\frac{\rewrm(\pi_{\ks})}{\cosrm(\pi_{\ks})}
\le
	\frac{\rp_{\Nex}(\ks)}{\sm_{\Nex}(\ks)}
\le
	\frac{\rp_{\Nex}(\kts)}{\sm_{\Nex}(\kts)}
\le
	\frac{\rewrm(\pi_{\kts}) + 4 \e_n}{\cosrm(\pi_{\kts}) - 2 (\capf{\kts}-1) \e_n}
\\
\le
	\frac{\rewrm(\pi_{\kts})}{\cosrm(\pi_{\kts})}	
	+\frac{2(\capf{\kts}+1)\e_n}{\cosrm(\pi_{\kts}) - 2 (\capf{\kts}-1) \e_n}
\end{multline*} 
where all the denominators are positive because $\Nex \ge 8 (\capf{K}-1)^2 \ln(4 K \Nex / \delta)$ and the last inequality follows by $(a+b)/(c-d) \le a/c + (d+b)/(c-d)$ for all $a\le 1$, $b\in\R$, $c\ge 1$, and $d<c$; 
next, if $\rp_{\Nex}(\ks)<0$ but $\rp_{\Nex}(\kts)\ge 0$ the exact same chain of inequalities hold; finally, if both $\rp_{\Nex}(\ks)<0$ and $\rp_{\Nex}(\kts)< 0$, then $\rp_{\Nex}(k) < 0$ for all $k\in C_{\Nex+1}$\footnote{Otherwise $\kts$ would belong to the set $\argmax_{k \in C_{\Nex+1}}  \brb{ {\rp_{\Nex}(k)}/{\sm_{\Nex}(k)} }$ which in turn would be included in the set $\bcb{k \in C_{\Nex+1} : \rp_{\Nex}(k) \ge 0}$ and this would contradict the fact that $\rp_{\Nex}(\kts) < 0$.}, hence, by definition of $\kts$ and the same arguments used above
\begin{multline*}
	\frac{\rewrm(\pi_{\ks})}{\cosrm(\pi_{\ks})}
\le
	\frac{\rp_{\Nex}(\ks)}{\spp_{\Nex}(\ks)}
\le
	\frac{\rp_{\Nex}(\kts)}{\spp _{\Nex}(\kts)}
\le
	\frac{\rewrm(\pi_{\kts}) + 4 \e_n}{\cosrm(\pi_{\kts}) + 2 (\capf{\kts}-1) \e_n}
\\
\le
	\frac{\rewrm(\pi_{\kts})}{\cosrm(\pi_{\kts})}	
	+\frac{2(\capf{\kts}+1)\e_n}{\cosrm(\pi_{\kts}) + 2 (\capf{\kts}-1) \e_n}
\le
	\frac{\rewrm(\pi_{\kts})}{\cosrm(\pi_{\kts})}	
	+\frac{2(\capf{\kts}+1)\e_n}{\cosrm(\pi_{\kts}) - 2 (\capf{\kts}-1) \e_n}
\end{multline*} 
That is, for all optimal policies $\pi_{\ks}$, the policy $\pi_{\kts}$ run at line~\ref{s:exploit} satisfies
\begin{multline*}
    \rewrm(\pi_{\kts})
\ge
	\cosrm(\pi_{\kts}) \lrb{ \frac{\rewrm(\pi_{\ks})}{\cosrm(\pi_{\ks})} - \frac{2(\capf{\kts}+1)\e_n}{\cosrm(\pi_{\kts}) - 2 (\capf{\kts}-1) \e_n} }
\\ \ge
	\cosrm(\pi_{\kts}) \lrb{ \frac{\rewrm(\pi_{\ks})}{\cosrm(\pi_{\ks})} - 4(\capf{K} +1)\e_n }
\end{multline*}
where in the last inequality we lower bounded the denominator by $1/2$ using $\cosrm(\pi_{\kts})\ge 1$ and $\e_n \le \e_{\Nex} \le 1/2$ which follows by $n \ge \Nex \ge 8 \capf{K}^2 \ln(4 K \Nex / \delta)$ and the monotonicity of $k\mapsto \capf{k}$.
Therefore, for all optimal policies $\pi_{\ks}$, the total expected reward of Algorithm~\ref{algo:cape} divided by its total expected cost (i.e., the negative addend in \eqref{e:regret}) is at least
\begin{multline*}
    \frac{ \E \bsb{ -\Nex + (N-\Nex) \, \rewrm(\pi_{\kts}) } } { \E \bsb{ 2\sum_{n=1}^{\Nex}\capf{\max(C_n)} + (N-\Nex) \, \cosrm(\pi_{\kts}) } }
\\
    \begin{aligned}
    & \ge 
        \frac{-\Nex}{  2\sum_{n=1}^{\Nex} \E \bsb{ \capf{\max(C_n)} } + (N-\Nex) \, \E \bsb{ \cosrm(\pi_{\kts}) } }
    \\
    & +
        \frac{(N-\Nex) \, \E \bsb{ \cosrm(\pi_{\kts}) } } { 2\sum_{n=1}^{\Nex} \E \bsb{ \capf{\max(C_n)} } + (N-\Nex) \, \E \bsb{ \cosrm(\pi_{\kts}) } }
        \lrb{ \frac{\rewrm(\pi_{\ks})}{\cosrm(\pi_{\ks})} - 4(\capf{K} +1)\e_n }
    \\ 
    & \ge
        \frac{\rewrm(\pi_{\ks})}{\cosrm(\pi_{\ks})} - 4(\capf{K} +1)\e_n
        - \frac{\Nex + 2\sum_{n=1}^{\Nex} \E \bsb{ \capf{\max(C_n)} } }{ 2\sum_{n=1}^{\Nex} \E \bsb{ \capf{\max(C_n)} } + (N-\Nex) \, \E \bsb{ \cosrm(\pi_{\kts}) } }
    \\
    & \ge
        \frac{\rewrm(\pi_{\ks})}{\cosrm(\pi_{\ks})} - 4(\capf{K} +1)\e_n
        - \frac{ (2 \capf{K} + 1)\Nex }{ N }
    \end{aligned}
\end{multline*}
where we used $\frac{a}{b+a}(x-y) \ge x-y - \frac{b}{b+a}$ for all $a,b,y>0$ and all $x\le 1$ to lower bound the third line, then the monotonicity of $k\mapsto \capf{k}$ and $2 \E \bsb{ \capf{\max(C_n)} } \ge \E \bsb{ \cosrm(\pi_{\kts}) } \ge 1$ for the last inequality. 
Rearranging the terms of the first and last hand side in the previous display, using the monotonicity of $k\mapsto \capf{k}$, and plugging in the value of $\e_n$, gives
\[
    R_T
\le
    4(\capf{K} +1)\e_n 
+ 
    \frac{ (2 \capf{K} + 1)\Nex }{ N }
=
    (\capf{K}+1)\sqrt{\frac{8 \ln(4K\Nex/\delta)}{\Nex}}
+ 
     \frac{ (2 \capf{K} + 1)\Nex }{ N }
\]
\end{proof}

\section{A Technical Lemma for Theorem~\ref{t:final}}
\label{s:techlem-countable}

In this section, we give a formal proof for a result needed to prove \Cref{t:final}.

\lemmaKcountable*
\begin{proof}
Note fist that $\rmm_{2^j}+2\e_j$ (line~\ref{s:rj}) is an empirical average of $m_j$ i.i.d.\ unbiased estimators of $\rewrm (\pi_{2^j})$. Indeed, being $\accept(k, \bx)$ independent of the variables $(x_{k+1},x_{k+2},\l)$ 
by definition of duration 
and the conditional independence of the samples (recall the properties of samples in step~\ref{i:samples} of our online protocol, Section~\ref{s:setting}), for all tasks $n$ performed at line~\ref{s:rj} during iteration $j$ and all $i > \capf{2^j}$,
\begin{multline*}
    \E \lsb{ X_{n,i} \,  \accept \brb{ \durationrm_{2^j} (\bX_n), \bX_n} \,\Big|\, \mu_n }
=
	\E \lsb{
	X_{n,i}
	\mid
	\mu_n
	}
	\E \Bsb{
		\accept \brb{ \durationrm_{2^j} (\bX_n), \bX_n}
	\,\Big|\,
	\mu_n	
	}
\\
=
	\mu_n
	\, 
	\E \Bsb{
		\accept \brb{ \durationrm_{2^j} (\bX_n), \bX_n}
	\,\Big|\,
	\mu_n	
	}
=
	\E \Bsb{
		\mu_n
		\, 
		\accept \brb{ \durationrm_{2^j} (\bX_n), \bX_n}
		\,\Big|\,
		\mu_n	
	}
\end{multline*}
Taking expectations to both sides proves the claim. 
Thus, Hoeffding's inequality implies 
\[
    \P \Brb{
	\rmm_{2^j}
	>
	\rewrm ( \pi_{2^j} )
	} 
= 
	\P \Brb{
	\brb{	
	\rmm_{2^j}
	+2\e_j }
	- \rewrm ( \pi_{2^j})
	>
	2\e_j
	}
\le 
	\frac{\delta}{j(j+1)}
\]
for all $j\le j_0$. 
Similarly, for all $l>j_0$, 
$
	\P \brb{ \sbar_{2^l} - \cosrm (\pi_{2^l})  > \capf{2^l} \,  \e_l } \le \frac{\delta}{l(l+1)}
$. 
Hence, the event
\begin{equation}
\label{e:good-event-phase-1}
	\bcb{ \rmm_{2^j} \le\rewrm ( \pi_{2^j}) }
	\; \wedge \;
	\bcb{ \sbar_{2^l} \le \cosrm (\pi_{2^l})) + \capf{2^l} \,  \e_l}
	\qquad
	\forall j \le j_0, \forall l > j_0
\end{equation}
occurs with probability at least 
\[
	1-\sum_{j=1}^{j_0} \frac{\delta}{j(j+1)}- \sum_{l=j_0+1}^{j_1}\frac{\delta}{l(l+1)}
\ge 
	1 - \delta \sum_{j\in\N} \frac{1}{j(j+1)}
= 
	1 - \delta
\] 
Note now that for each policy $\pi_k$ with $\rewrm(\pi_k)\ge 0$ and each optimal policy $\pi_{\ks}$, 
\begin{equation}
    \label{e:reduClosArg}
    \frac{\rewrm(\pi_k)}{\capf{k}}
\le
	\frac{\rewrm(\pi_k)}{\cosrm(\pi_k)}
\le
	\frac{\rewrm(\pi_{\ks})}{\cosrm(\pi_{\ks})}
\le
	\frac{1}{\cosrm(\pi_{\ks})}
\end{equation}
Hence, all optimal policies $\pi_{\ks}$ satisfy $\cosrm(\pi_{\ks}) \le \capf{k}/\rewrm(\pi_k)$ for all policies $\pi_k$ such that $\rewrm(\pi_k) > 0$. 
Being \duration{}s sorted by index, for all $k\le h$
\begin{equation}
\label{e:earl-stop-monot}
	\cosrm(\pi_k)
=
	\E \bsb{ \costrm(\pi_k,\mu_n) }
\le
	\E \bsb{ \costrm(\pi_h,\mu_n) }
=
	\cosrm(\pi_h)
\end{equation}
Thus, with probability at least $1 - \delta$, for all $k>K$
\[
	\cosrm(\pi_k)
\overset{\eqref{e:earl-stop-monot}}{\ge}
	\cosrm(\pi_K)
\overset{\eqref{e:good-event-phase-1}}{\ge}
	\sbar_{K} - \capf{K} \,  \e_{\log_2 K}
\overset{\text{line~}\ref{st:test-p2}}{>}
	\frac{\capf{k_0}}{\rmm_{k_0}}
\ge
	\frac{\capf{k_0}}{\rewrm(k_0)}
\] 
where $\rewrm(k_0) \ge \rmm_{k_0} > 0$ by \eqref{e:good-event-phase-1} and line~\eqref{st:test-p1}; i.e., $\pi_k$ do not satisfy \eqref{e:reduClosArg}. Therefore, with probability at least $1-\delta$, all optimal policies $\pi_{\ks}$ satisfy $\ks \le K$.
\end{proof}

\section{Choice of Performance Measure}
\label{s:model-choice}
In this section, we discuss our choice of measuring the performance of  policies $\pi$ with
\[
    \frac{\sum_{n=1}^N \E \bsb{ \rewardrm(\pi,\mu_n) } }
        {\sum_{m=1}^N \E \bsb{ \costrm(\pi,\mu_m) } }
=
    \frac{\rewrm(\pi)}{\cosrm(\pi)}
\]
We compare several different benchmarks and investigate the differences if the agent had a budget of samples and a variable number of tasks, rather than the other way around. We will show that all ``natural'' choices  essentially go in the same direction, except for one (perhaps the most natural) which turns out to be the worst.

At a high level, an agent constrained by a budget would like to maximize its ROI. This can be done in several different ways. 
If the constraint is on the number $N$ of tasks, then the agent could aim at  maximizing (over $\pi = (\durationrm, \accept) \in \Pol$) the objective $g_1(\pi,N)$ defined by
\[
	g_1(\pi, N)
=
    \E \lsb{ \frac{\sum_{n=1}^N \rewardrm(\pi,\mu_n)}{\sum_{m=1}^N \costrm(\pi,\mu_m)} }
\] 
This is equivalent to the maximization of the ratio
\[
    \frac{\rewrm(\pi)}{\cosrm(\pi)}
=
    \frac{ \E \bsb{ \rewardrm(\pi,\mu_n) } }{ \E \bsb{ \costrm(\pi,\mu_n) } }
\]
in the sense that, multiplying both the numerator and the denominator in $g_1(\pi,N)$ by $1/N$ and applying Hoeffding's inequality, we get $g_1(\pi,N) = \Theta\brb{ \rewardrm(\pi) / \costrm(\pi) }$. Furthermore, by the law of large numbers and Lebesgue's dominated convergence theorem, $g_1(\pi,N) \to \rewardrm(\pi) / \costrm(\pi)$ when $N\to \iop$ for any $\pi\in \Pol$. 

Assume now that the constraint is on the total number of samples instead. We say that the agent has a \textsl{budget of samples} $T$ if as soon as the total number of samples reaches $T$ during task $N$ (which is now a random variable), the agent has to interrupt the run of the current policy, reject the current \mutation{} $\mu_N$, and end the process. Formally, the random variable $N$ that counts the total number of tasks performed by repeatedly running a policy $\pi = (\durationrm, \accept)$ is defined by
\[
	N 
= 
	\min \lcb{ m\in \N \,\bigg|\, \sum_{n=1}^m \durationrm(\bX_n) \ge T }
\]
In this case, the agent could aim at maximizing
the objective
\[
	g_2(\pi,T)
=
    \E \lsb{ \frac{\sum_{n=1}^{N-1}\rewardrm(\pi, \mu_n)}{T} }
\] 
where the sum is $0$ if $N=1$ and it stops at $N-1$ because the the last task is interrupted and no reward is gained. 
As before, assume that $\durationrm \le D$, for some $D \in \N$.
Note first that by the independence of $\mu_n$ and $\bX_n$ from past tasks, for all deterministic functions $f$ and all $n\in\N$, the two random variables $f(\mu_n,\bX_n)$ and 
$
	\I\{N \ge n\} 
$ 
are independent, because 
$
	\I\{N \ge n\} = \I \bcb{ \sum_{i=1}^{n-1} \durationrm(\bX_i) < T }
$ 
depends only on the random variables $\durationrm(\bX_1), \l,\durationrm(\bX_{n-1})$. 
Hence
\begin{align*}
	\E \Bsb{ 
	\rewardrm(\pi,\mu_n)
	\,  \I\{N \ge n\} }
& =
    \rewrm(\pi) \,  \P(N \ge n)\\
	\E \bsb{ 
	\costrm(\pi,\mu_n)
	\,  \I\{N \ge n\} }
& =
    \cosrm(\pi) \,  \P(N \ge n)
\end{align*}
Moreover, note that during each task at least one sample is drawn, hence $N\le T$ and
\begin{align*}
	\sum_{n=1}^\iop \E \Bsb{ \bab{ 
	\rewardrm(\pi,\mu_n)
	} \,  \I\{N \ge n\} }
& \le
	\sum_{n=1}^T \E \Bsb{ \big| \rewardrm(\pi,\mu_n) \big| }
\le 
	T
<
	\iop\\
	\sum_{n=1}^{\iop} \E \bsb{
	\costrm(\pi,\mu_n)
	\,  \I \{N\ge n\} }
& \le
	\sum_{n=1}^{T} \E \bsb{ \costrm(\pi,\mu_n) }
=
	 T \, \cosrm(\pi)
\le
    T D
<
	\iop
\end{align*}
We can therefore apply Wald's identity \citep{wald1944cumulative} to deduce
\[
    \E \lsb{ \sum_{n=1}^N \rewardrm(\pi,\mu_n) }
=
	\E[N]\, \rewrm(\pi)
\qquad \text{and} \qquad
	\E \lsb{ \sum_{n=1}^N \cosrm(\pi,\mu_n) }
=
	\E[N]\, \cosrm(\pi)
\]
which, together with 
\[
	\E \lsb{ \sum_{n=1}^N 
	\costrm(\pi,\mu_n)
	}
\ge
    T
\ge
	\E \lsb{ 
	\sum_{n=1}^N 
	\costrm(\pi,\mu_n)
	} - 
	D
\] 
and 
\[
    \E \lsb{ \sum_{n=1}^N \rewardrm(\pi,\mu_n) }
    -1
\le
    \E \lsb{ \sum_{n=1}^{N-1} \rewardrm(\pi,\mu_n) }
\le
    \E \lsb{ \sum_{n=1}^N \rewardrm(\pi,\mu_n) }
    +1
\]
yields
\[
	\frac{\E[N] \,  \rewrm(\pi) - 1}{\E[N] \,  \cosrm(\pi)}
\le
	g_2(\pi,T)
\le
	\frac{\E[N] \,  \rewrm(\pi) + 1}{\E[N] \,  \cosrm(\pi) 
	- 
	D
	}
\]
if the denominator on the right-hand side is positive, which happens as soon as $T > D^2$ by
$ 
    N D
\ge
    \sum_{n=1}^N \durationrm(\bX_n) 
\ge 
    T
$
and $\cosrm(\pi) \ge 1$.
I.e., $g_2(\pi, T) = \Theta\brb{ \rewrm(\pi)/\cosrm(\pi) }$ and noting that $\E[N] \ge T/D \to \iop$ if $T\to \iop$, we have once more that $g_2(\pi, T) \to \rewrm(\pi)/\cosrm(\pi)$ when $T\to \iop$ for any $\pi\in \Pol$.

This proves that having a budget of tasks, samples, or using any of the three natural objectives introduced so far is essentially the same. 

Before concluding the section,
we go back to the original setting and discuss a very natural definition of objective which should be avoided because, albeit easier to maximize, it is not well-suited for this problem.
Consider as objective the average payoff of accepted \mutation{}s per amount of time used to make the decision, i.e.,
\[
	g_3(\pi)
=
	\E \lsb{ \frac{\rewardrm(\pi,\mu_n)}{\costrm(\pi,\mu_n)} }
\] 
We give some intuition on the differences between the ratio of expectations and the expectation of the ratio $g_3$ using the concrete example \eqref{e:pol-hoeff1} and we make a case for the former being better than the latter. 

More precisely, if $N$ decision tasks have to be performed by the agent, consider the natural policy class $\{\pi_k\}_{k\in\{1,\l,K\}} = \bcb{ (\durationrm_k, \accept) }_{k \in \{1,\l,K\}}$ given by 
\[
    \durationrm_k(\bx)
 =
    \min \lrb{k,\ \inf \lcb{ n \in \N : \lab{ \xbar_n } \ge c \sqrt{\frac{\ln \frac{KN}{\delta} }{n}}  } }
,\quad 
    \accept(n, \bx)
 =
    \I \lcb{ \xbar_n \ge c \sqrt{\frac{\ln \frac{KN}{\delta} }{n}}  }
\]
for some $c>0$ and $\delta\in(0,1)$, where $\xbar_n = (\nicefrac{1}{n})\sum_{i= 1}^n x_i$ is the average of the first $n$ elements
of the sequence $\bx= (x_1, x_2, \l)$.

If $K\gg 1$, there are numerous policies in the class with a large cap. For concreteness, consider the last one $(\durationrm_K, \accept)$
and let $k = \bce{ c^2 \ln(KN/\delta) }$.
If $\mu_n$ is uniformly distributed on $\{-1,0,1\}$, then
\[
	\Brb{ \durationrm_K(\bX_0), \accept\brb{ \durationrm_K(\bX_0), \bX_0 } }
=
	\begin{cases}
		(k, 1)	& \text{if } \mu_1 = 1\\
		(k, 0)	& \text{if } \mu_1 = -1\\
		(K, 0)	& \text{if } \mu_1 = 0
	\end{cases}
\]
i.e., the agent understands quickly (drawing only $k$ samples) that $\mu_n = \pm 1$, accepting it or rejecting it accordingly, but takes exponentially longer ($K \gg k$ samples) to figure out that the \mutation{} is nonpositive when $\mu_n = 0$. The fact that for a constant fraction of tasks ($1/3$ of the total) $\pi$ invests a long time ($K$ samples) to earn no reward makes it a very poor choice of policy. This is not reflected in the value of $g_3(\pi_K)$ but it is so in $\rewrm(\pi_K)/\cosrm(\pi_K)$. 
Indeed, in this instance
\[
	\E \lsb{ \frac{\rewardrm(\pi_K,\mu_n)}{\costrm(\pi_K,\mu_n)} }
=
	\Theta \lrb{ \frac{1}{k} }
\qquad \gg \qquad
	\Theta \lrb{ \frac{1}{K} }
=
	\frac{\rewrm(\pi_K)}{\cosrm(\pi_K)}
\]
This is due to the fact that the expectation of the ratio ``ignores'' outcomes with null (or very small) rewards, even if a large number of samples is needed to learn them. On the other hand, the ratio of expectations weighs the total number of requested samples and it is highly influenced by it, a property we are interested to capture within our model.

\section{An Impossibility Result}
\label{s:imposs}

We conclude the paper by showing that, in general, given $\mu_n$ it is impossible to define an unbiased estimator of the reward of all policies using only the samples drawn by the policies themselves, unless $\mu_n$ is known beforehand.

Take a policy $\pi_1 = (1, \accept)$ that draws exactly one sample. 
Note that such a policy is included in all sets of policies $\Pol$
so this is by no means a pathological example.
As before, assume for the sake of simplicity that samples take values in $\spin$ and consider any \decision{} function $\accept$ such that $\accept(1, \bx) = (1+x_1)/2$ for all $\bx = (x_1, x_2, \l )$.
In words, the policy $\pi_1$ looks at one single sample $x_1 \in \spin$ and accepts if and only if $x_1 = 1$.
As discussed earlier (Section~\ref{s:related}, Repeated A/B testing, and Section~\ref{s:model-choice}, where $\mu$ is concentrated around $[-1,0]\cup\{1\}$), there are settings in which this policy is optimal, so this choice of \decision{} function cannot be dismissed as a mathematical pathology.

The following lemma shows that in the simple, yet meaningful case of the policy $\pi_1$ described above, it is impossible to define an unbiased estimator of its expected reward given $\mu_n$
\[
    \E\bsb{ \mu_n \, \accept(1,\bX_{n}) \mid \mu_n }
=
    \mu_n \, \E\lsb{ \frac{ 1 + X_{n,1} }{2} \mid \mu_n}
=
    \frac{ \mu_n + \mu_n^2 }{2}
\]
using only $X_{n,1}$, unless $\mu_n$ is known beforehand.

\begin{lemma}
\label{lm:imposs}
Let $\tilde X$ be a $\spin$-valued random variable with $\E[ \tilde X ] = \tilde \mu$, for some real number $\tilde \mu$. 
If there exists an unbiased estimator $f(\tilde X)$ of $\brb{ \tilde \mu + \tilde \mu ^2 }/2$, for some $f\colon \spin \to \R$, then $f$ satisfies
\[
\begin{cases}
f(-1) = 0
    & \text{ if } \tilde \mu = -1 
\\[1ex]
f(1) = \dt{ \tilde \mu - f(-1) \frac{1-\tilde \mu}{1+\tilde\mu} }
    & \text{ if } \tilde \mu \neq -1
\end{cases}
\]
i.e., to define any such $f$ (thus, any unbiased estimator of $\brb{ \tilde \mu + \tilde \mu ^2 }/2$) it is necessary to know $\tilde \mu$.
\end{lemma}
\begin{proof}
From
$
    \E[\tilde X] 
= 
    1 \cdot \P(\tilde X=1) + (- 1)\cdot \P(\tilde X=-1) 
=
    - 1 + 2\P(\tilde X=1)
$
and our assumption $\E[ \tilde X ] = \tilde \mu$, we obtain $\P(\tilde X=1) = ( 1 + \tilde \mu )/2$.

Let $f\colon \spin \to \R$ be any function satisfying
$
    \E \bsb{ f(\tilde X) } = \brb{ \tilde \mu + \tilde \mu ^2 }/2
$.
Then, from the law of the unconscious statistician
\[
    \E \bsb{ f(\tilde X) }
=
    f(1) \P(\tilde X=1) + f(-1) \P(\tilde X=-1)
=
    f(1) \frac{1 + \tilde\mu}{2} + f(-1) \frac{1 - \tilde\mu}{2}
\]
and our assumption
$
    \E \bsb{ f(\tilde X) } = \brb{ \tilde \mu + \tilde \mu ^2 }/2
$, we obtain
\[
    f(1) (1 + \tilde\mu) + f(-1) (1 - \tilde\mu)
=
    \tilde \mu + \tilde \mu ^2
\]
Thus, if $\tilde \mu = -1$, we have $f(-1) = 0$. Otherwise, solving for $f(1)$ gives the result.
\end{proof}

} 

\bibliographystyle{ACM-Reference-Format}
\bibliography{bib1}

\end{document}